\documentclass[wcp]{jmlr}

\usepackage{longtable}

\usepackage{booktabs}

\usepackage{lineno}
\usepackage{enumitem}
\usepackage{silence}
\usepackage[american]{babel}

\usepackage{booktabs} 
\usepackage{tikz} 
\usepackage{bm}
\usepackage{graphicx}
\usepackage{float}
\usepackage{amssymb,amsfonts}%
\usepackage{multirow}%
\usepackage{makecell}
\usepackage{arydshln}
\usepackage{bbm}%

\RestyleAlgo{ruled}
\DontPrintSemicolon
\LinesNumbered
\SetAlgoVlined
\SetKwInOut{Input}{Input}
\SetKwInOut{Output}{Output}

\makeatletter
\let\Ginclude@graphics\@org@Ginclude@graphics 
\makeatother

\title[]{Learning Sparsest Linear Causal DAGs with Latent Confounders via Higher-Order Cumulants}

\author{
 \Name{Ming Cai}* \Email{cai.ming.52d@st.kyoto-u.ac.jp}\\
 \addr Graduate School of Informatics, Kyoto University, Kyoto, Japan
 \AND
 \Name{Hisayuki Hara} \Email{hara.hisayuki.8k@kyoto-u.ac.jp}\\
 \addr Institute for Liberal Arts and Sciences, Kyoto University, Kyoto, Japan
}

\begin{document}

\makeatletter
\let \@jmlrpages \@empty
\makeatother

\maketitle

\begin{abstract}
Recovering the exact directed acyclic graph (DAG) in linear non-Gaussian acyclic models with latent confounders (LvLiNGAM) remains a challenging problem. Although LvLiNGAM is identifiable only up to an observational equivalence class, each equivalence class is characterized by a unique sparsest DAG. Recovering the sparsest DAG from finite samples, however, remains difficult. Although existing methods are asymptotically consistent, they do not provide an explicit finite-sample procedure for recovering the unique sparsest DAG, nor do they handle models with an arbitrary number of latent confounders.

In this paper, we propose a finite-sample method for recovering the sparsest DAG without imposing any restriction on the number of latent confounders. Simulation studies and real-data analyses demonstrate that the proposed method achieves superior finite-sample performance compared with existing approaches. 
\end{abstract}
\begin{keywords}
Causal discovery; DAG; LiNGAM; Latent confounder; Cumulant;
\end{keywords}

\section{Introduction}\label{sec:intro}
Linear non-Gaussian acyclic models (LiNGAMs) provide a powerful framework for causal discovery 
\citep{Shimizu2006, Shimizu2011}.
In the absence of latent variables, LiNGAM enables complete identification of causal DAGs.
In many practical applications, however, latent confounders are unavoidable.
\citet{hoyer2008estimation} introduced LiNGAM with latent variables (LvLiNGAM) and demonstrated that any LvLiNGAM can be transformed into a canonical model in which all latent variables are mutually independent and causally precede the observed ones. 
They also estimated the mixing matrix using overcomplete independent component analysis (OICA; e.g., \citealp{eriksson2004identifiability}), assuming that the number of latent variables is known a priori.
However, because it relies on OICA, this approach is prone to converge to local optima~\citep{shimizu14aBayesianestimation}.

To avoid relying on OICA, several methods have been proposed to estimate 
canonical LvLiNGAMs via residual independence tests,
such as Pairwise LvLiNGAM \citep{Entner2011}, ParceLiNGAM \citep{tashiro2014parcelingam}, Repetitive Causal Discovery (RCD) \citep{Maeda2020, Maeda2022, maeda2022rcd}, and BANG \citep{wang2023BANG}.
However, none of these methods can fully identify ancestral relationships or parent-child relationships between observed variables that form the bow structures~\citep{wang2023BANG}. 

When bow structures are present, \citet{chen2024identification} use cumulants of observed variables to identify their ancestral relationships in the bivariate setting with a single latent confounder. 
Building on this, \citet{chen2025identification} extend the approach to multiple latent confounders in the observed bivariate case. 

\cite{schkoda2024causal} proposed ReLVLiNGAM, a recursive cumulant-based method that accommodates multiple observed variables and latent confounders. Without relying on OICA or requiring prior knowledge of the number of latent variables, ReLVLiNGAM recovers the observational equivalence class of a canonical LvLiNGAM. 

Under the genericity assumption described below, the sparsest DAG within an observational equivalence class is uniquely determined. 
The sparsest DAG is generic within its model class, whereas any denser DAG in the same observational equivalence class requires non-generic parameter values. 
This provides a natural justification for treating the sparsest DAG as the canonical representative of the observational equivalence class. 
Although ReLVLiNGAM consistently estimates the mixing matrix, it does not provide a procedure for estimating the sparsest DAG from finite samples. 

Moreover, 
the original ReLVLiNGAM requires that, in each iteration, 
an observed variable with no observed parents (an observed source hereafter) has fewer latent parents than observed siblings; otherwise, the cumulant updates and mixing-matrix estimation may fail (Figure~\ref{fig: equivalence class of IV graph}; see Appendix~\ref{sec: simple improvement}). We refer to this issue as ReLVLiNGAM’s local restriction.

In this paper, we develop a finite-sample algorithm for recovering the sparsest DAG in the observational equivalence class of canonical LvLiNGAMs without the local restriction. The proposed method builds on the top-down framework of ReLVLiNGAM. It successively infers parent-child relationships among observed variables, starting from observed sources, to recover the sparsest DAG. Within this framework, we introduce two key innovations.

First, we introduce an update rule that directly residualizes the observed variables, rather than recursively updating higher-order cumulants as in ReLVLiNGAM. Updating the observed variables rather than their cumulants mitigates the recursive propagation of errors in higher-order cumulant estimates, thereby improving finite-sample performance. Moreover, this update rule eliminates the need for the local restriction in ReLVLiNGAM, extending the applicability of the method.

Second, we introduce a sequential procedure for identifying exact parent--child relationships between each observed source and its descendants from the estimated ancestral structure. This procedure enables direct recovery of the sparsest DAG from finite samples. 

Our main contributions are as follows: (1) we propose a finite-sample algorithm for recovering the sparsest DAG in the observational equivalence class of canonical LvLiNGAMs; (2) we introduce an update rule that directly residualizes the observed variables instead of recursively updating higher-order cumulants, thereby removing the local restriction and improving finite-sample performance; (3) we propose a new parent--child criterion that enables direct recovery of the sparsest DAG from finite samples; (4) experiments on synthetic and real data demonstrate the effectiveness of the proposed method, particularly when ReLVLiNGAM's local restriction is violated.

\section{Preliminaries}
\label{sec: problem definition}
\subsection{Canonical LvLiNGAM}
\label{sec: lvLiNGAM}
Let $\bm{X}=(X_{1},\ldots,X_{p})^{\top}$ be the observed variables and $\bm{L}=(L_{1},\ldots,L_{q})^{\top}$ be the latent confounders. 
Denote a causal DAG by $\mathcal{G}=(\bm{V}, E)$, where $\bm{V}=\bm{X}\cup\bm{L}$ and $E \subset \bm{V} \times \bm{V}$ is the set of directed edges. 
Define $E^O= (\bm{X} \times \bm{X}) \cap E$ and $E^{OL}=(\bm{L} \times \bm{X}) \cap E$. 
We call $\mathcal{G}^O=(\bm{X},E^O)$ the observed DAG of $\mathcal{G}$, and
let $\mathcal{G}^{OL}=(\bm{L},\bm{X},E^{OL})$ be the latent-to-observed bipartite graph of $\mathcal{G}$.
For $X_i \in \bm{X}$, let $\mathrm{Anc}(X_i)$, $\mathrm{Des}(X_i)$, $\mathrm{Pa}(X_{i})$, and $\mathrm{Ch}(X_i)$ denote the sets of ancestors, descendants, parents, and children of $X_i$, respectively. 
For two variables $V,V' \in \bm{V}$, 
we denote a directed edge from $V$ to $V'$ by $V\to V'$, and let $\mathcal{P}(V,V')$ be the set of directed paths from $V$ to $V'$.
For two observed variables $X_i$ and $X_j$, we define their (possibly latent) confounders as
\begin{align*}
\mathrm{Conf}(X_{i}, X_{j})
= \Big\{ V: \exists\, \pi \in \mathcal{P}(V,X_i), 
\exists\, \pi' \in \mathcal{P}(V,X_j)  
\text{ s.t. } \big(\pi \cap \pi'\big) \setminus \{V\} = \emptyset
\Big\}.
\end{align*}

In this paper, we employ the canonical LvLiNGAM~\citep{hoyer2008estimation}, 
where latent variables are mutually independent, and each has at least two observed children. 
The model 
defined by $\mathcal{G}$ is expressed as 
\begin{align}
\label{eq: LvLiNGAM}
\bm{X} = \bm{\Lambda}\bm{L} + \bm{B}\bm{X} + \bm{e}, 
\end{align}
where $\bm{e} = (e_1,\ldots,e_p)^\top$, or, equivalently,
\begin{align}
\label{eq: LvLiNGAM 2}
    \bm{X} &= \left[\left(\bm{I} - \bm{B}\right)^{-1}\bm{\Lambda}, \;\; \left(\bm{I} - \bm{B}\right)^{-1}\right]
    \left[\bm{L}^{\top}, \bm{e}^{\top}
    \right]^{\top}. 
\end{align}
The components of $\bm{u} = \left(\bm{L}^{\top}, \bm{e}^{\top}\right)^{\top}$ are mutually independent, and each follows a continuous non-Gaussian distribution with nonzero higher-order cumulants.
$\bm{\Lambda}=\{\lambda_{ji}\}$ and $\bm{B}=\{b_{ji}\}$ collect the direct causal coefficients for $L_i\to X_j \in E$ and $X_i \to X_j \in E$, respectively, and $\bm{M}=\left[\left(\bm{I} - \bm{B}\right)^{-1}\bm{\Lambda}, \;\; \left(\bm{I} - \bm{B}\right)^{-1}\right]$ is the mixing matrix whose entries are total effects. 
Let $m^{OL}_{ji}$ and $m^{O}_{ji}$ denote the total effects of $L_i$ and $X_i$ on $X_j$, respectively. Since latent scales are arbitrary, without loss of generality, we fix $\lambda_{ji}=1$ for $X_j \in \mathrm{Ch}(L_i)$ with the highest causal order. The same normalization is also used in~\cite{schkoda2024causal}.

Throughout this paper, we assume that the coefficients in \eqref{eq: LvLiNGAM} and the higher-order cumulants of $\bm{u}$ are generic. We call this assumption the genericity assumption. 
In particular, the results in this paper hold except for a set of coefficients and cumulants of Lebesgue measure zero.

We denote by $P(\bm{V})$ the joint distribution of $\bm{V}$. 
Suppose that, in \eqref{eq: LvLiNGAM 2}, there exist $X_i$ and $L_j$ such that $\mathrm{Des}(L_j)=\mathrm{Des}(X_i)\cup\{X_i\}$.
\cite{salehkaleybar2020learning} showed that, under this condition, exchanging the columns of the mixing matrix $\bm{M}$ corresponding to $L_j$ and $e_i$ yields another valid LvLiNGAM representation while preserving both $P(\bm{V})$ and the ancestral relationships of $\bm{X}$. Depending on the graph structure, the resulting representation may correspond to a different causal DAG. Following their terminology, we refer to this column-exchange operation as a swap between $L_j$ and $e_i$.

Figure~\ref{fig: equivalence class of IV graph} illustrates such an example. DAG~(a) is the original causal DAG $\mathcal{G}$, whereas DAG~(b) is obtained by a swap between $L_3$ and $e_2$. Although the two DAGs induce the same $P(\bm{V})$ and ancestral relationships, the swap introduces an additional directed edge $X_1 \to X_3$ in DAG~(b). This example shows that multiple causal DAGs may belong to the same observational equivalence pattern.

Nevertheless, the sparsest DAG within an observational equivalence class is of particular interest. In the example above, DAG~(a) is sparser than DAG~(b). Moreover, under the genericity assumption, a generic parameterization of DAG~(a) corresponds to a measure-zero subset of the parameter space of DAG~(b). Thus, the sparsest DAG can be regarded as a canonical representative of the observational equivalence class.

\citet{schkoda2024causal} showed that the mixing matrix of a canonical LvLiNGAM is identifiable under the genericity assumption, which in turn identifies the corresponding observational equivalence class. Hence, the sparsest DAG within the class is also identifiable. However, ReLVLiNGAM focuses on consistent estimation of the mixing matrix, rather than direct recovery of the sparsest DAG from finite samples. Moreover, it requires that, at each iteration, every observed source has fewer latent parents than each of its observed siblings, a condition that we refer to as the local restriction. 
The model shown in Figure~\ref{fig: equivalence class of IV graph} violates this restriction, and therefore ReLVLiNGAM cannot be applied directly.

\begin{figure}[t]
    \centering
    \subfigure[\centering]{
        \includegraphics[width=0.25\textwidth]{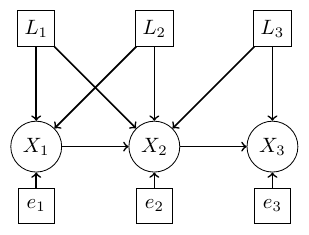}
    }
    \subfigure[\centering]{
    \includegraphics[width=0.25\textwidth]{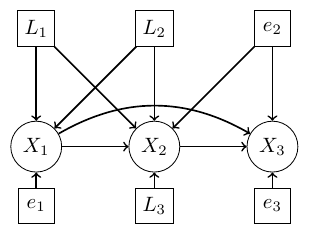}
    }
    \caption{An example of observationally equivalent models with different DAGs.}
    \label{fig: equivalence class of IV graph}
\end{figure}
\subsection{Cumulants}
\label{sec: cumulants}
In this section, we review several results established by \cite{schkoda2024causal} concerning the relationship between higher-order cumulants of LvLiNGAM variables and total effects, which also play a central role in this paper.
\begin{definition}[Cumulants \citep{Brillinger}]
\label{def: cumulant}
Let $\mathcal{I}\!=\![p]$ be the set of indices. 
For a $p$-dimensional variable $\bm{X} \!=\!(X_{1}, \!\dots, \!X_{p})^\top$,  
the $k$-th order cumulant $c^{(k)}_{i_{1}, \dots, i_{k}}$ is defined by
\begin{align*}
    c^{(k)}_{i_{1}, \dots, i_{k}}
    =\sum_{D_{i} \in \{D_{1}, \dots, D_{h}\}} (-1)^{h-1}(h-1)!\prod_{d_{i}\in D_{i}}\mathbb{E}\left[\prod_{j\in d_{i}}X_{j}\right],
\end{align*}
where 
$\{i_{1}, \dots, i_{k}\} \in 
\mathcal{I}^{k}$
and $\{D_{1}, \dots, D_{h}\}$ is the set of all partitions of $\{i_{1}, i_{2}, \dots, i_{k}\}$.
\end{definition}
When $i_{1} = i_{2} = \cdots=i_{k} = i$, $\kappa^{(k)}(X_{i})$ denotes the $k$-th order cumulant of $X_{i}$.
From \eqref{eq: LvLiNGAM 2}, 
the $k$-th order cumulants of observed variables can be rewritten as
\begin{align}\label{eq: cumulant linear}
    c^{(k)}_{i_{1}, \dots, i_{k}}
    =
    \sum_{h =1}^q m^{OL}_{i_{1}h} \dots m^{OL}_{i_{k}h}\kappa^{(k)}(L_{h}) + \sum_{h=1}^p m^{O}_{i_{1}h} \dots m^{O}_{i_{k}h}\kappa^{(k)}(e_{h}).
\end{align}
Fix two observed variables $X_i$ and $X_j$, and treat all variables in $\bm{V}\setminus \{X_i,X_j\}$ as latent.
Applying Algorithm~A in \citet{hoyer2008estimation}, we obtain a canonical LvLiNGAM in which the latent confounders $\mathrm{Conf}(X_i,X_j)=\{L^\prime_{1}, L^\prime_{2}, \cdots, L^\prime_{\ell}\}$ are mutually independent.
Without loss of generality, assume $X_j\notin\mathrm{Anc}(X_i)$. 
Then, the canonical LvLiNGAM is expressed as
\begin{align}
\label{eq: two variables}
X_{i} &\!=\!\sum_{h=1}^{\ell} {m^{OL'}_{ih}}L'_{h}\!+\!v_{i}, \quad X_{j}\!=\!\sum^{\ell}_{h = 1} {m^{OL'}_{jh}}L'_{h}\!+\!m^{O}_{ji}v_{i}\!+\!v_{j},  
\end{align}
where $v_{i}$ and $v_{j}$ are disturbances, and ${m^{OL'}_{ih}}\text{ and }{m^{OL'}_{jh}}$ are total effects from $L'_{h}$ to $X_{i}$ and $X_{j}$, respectively, in the canonical model over $X_{i}$ and $X_{j}$. 
$\ell$ is the number of confounders between $X_i$ and $X_j$ in the original model.

\cite{schkoda2024causal} estimate the canonical LvLiNGAM under the genericity assumption using higher-order cumulants of observed variables. 
They define the matrix $A^{(k_{1}, k_{2})}_{{j}, {i}}$ in \eqref{eq: a matrix} with $k_{1}< k_{2}$.  
\begin{align}\label{eq: a matrix}
    A^{(k_{1}, k_{2})}_{{j}, {i}} =
    \left[
    \begin{array}{cccc}
       c^{(k_{1})}_{i, i, \dots, i}  & c^{(k_{1})}_{i, i, \dots, j} & \dots & c^{(k_{1})}_{i, j, \dots, j}\\
       c^{(k_{1}+1)}_{i, i, i, \dots, i}  & c^{(k_{1}+1)}_{i, i, i, \dots, j} & \dots & c^{(k_{1}+1)}_{i, i, j, \dots, j}\\
       c^{(k_{1}+1)}_{j, i, i, \dots, i}  & c^{(k_{1}+1)}_{j, i, i, \dots, j} & \dots & c^{(k_{1}+1)}_{j, i, j, \dots, j}\\
       \vdots & \vdots & \ddots & \vdots \\
       c^{(k_{2})}_{i, \dots, i, i, i, \dots, i, i}  & c^{(k_{2})}_{i, \dots, i, i, i, \dots, i, j} & \dots & c^{(k_{2})}_{i, \dots, i, i, j, \dots, j, j}\\
       \vdots & \vdots & \ddots & \vdots \\
       c^{(k_{2})}_{j, \dots, j, i, i, \dots, i, i}  & c^{(k_{2})}_{j, \dots, j, i, i, \dots, i, j} & \dots & c^{(k_{2})}_{j, \dots, j, i, j, \dots, j, j}
    \end{array}
    \right].
\end{align}
Define $A^{(k_{1}, k_{2})}_{i,j}$ analogously by swapping $i$ and $j$. 
Proposition~\ref{thm: latent number} allows us to identify $\ell$ in the model \eqref{eq: two variables} and the causal order between $X_i$ and $X_j$.
\begin{proposition}[\cite{schkoda2024causal}]
\label{thm: latent number}
    Assume that $X_{i}$ and $X_{j}$ are two observed variables where $X_{j} \notin \mathrm{Anc}(X_{i})$. Let $d := \min(\sum^{k_{2}-k_{1}+1}_{i=1}i, k_{1})$.
    Then, 
    
    1.  $A^{(k_{1}, k_{2})}_{{j}, {i}}$ generically has rank $\min(\ell+1, d)$.
    
    2.  If $m^{O}_{ji}\!\ne\!0$, $A^{(k_{1}, k_{2})}_{{i}, {j}}$ generically has rank $\min(\ell+2,\!d)$.
    
    3.  If $m^{O}_{ji}\!=\!0$, $A^{(k_{1}, k_{2})}_{{i}, {j}}$ generically has rank $\min(\ell+1,\!d)$.
\end{proposition}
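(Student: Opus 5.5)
The plan is to write $A^{(k_1,k_2)}_{j,i}$ as a product of three structured matrices, using the two-variable canonical model \eqref{eq: two variables} and the multilinearity of cumulants in \eqref{eq: cumulant linear}. The rank upper bound then follows from counting, and the lower bound from a genericity argument on one nonvanishing minor.

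\emph{Factorization.} The sources of the model \eqref{eq: two variables} are $L'_1,\dots,L'_\ell,v_i,v_j$. Write $a_s$ and $b_s$ for the loadings of source $s$ on $X_i$ and $X_j$. Then $(a_s,b_s)=(m^{OL'}_{ih},m^{OL'}_{jh})$ for $L'_h$, $(a_s,b_s)=(1,m^O_{ji})$ for $v_i$, and $(a_s,b_s)=(0,1)$ for $v_j$. Every cumulant is
\[
c^{(k)}_{\iota_1,\dots,\iota_k}=\sum_s \kappa^{(k)}(s)\prod_r a_s^{[\iota_r=i]}b_s^{[\iota_r=j]} .
\]
Index the columns of $A^{(k_1,k_2)}_{j,i}$ by $t=0,\dots,k_1-1$, the number of $j$'s among the last $k_1-1$ indices; the index just before these is always $i$. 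Index each row by an order $k\in\{k_1,\dots,k_2\}$ together with the multiset of its $k-k_1$ leading indices. Then
\[
A^{(k_1,k_2)}_{j,i}=U\,\mathrm{diag}(a_s)\,W ,
\]
where $W_{s,t}=a_s^{k_1-1-t}b_s^{t}$, and $U_{r,s}=\kappa^{(k)}(s)$ times the leading monomial of row $r$ evaluated at $(a_s,b_s)$. The matrix $A$ has $k_1$ columns and $\sum_{m=1}^{k_2-k_1+1}m$ rows, so its rank is at most $d$.

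\emph{Upper bounds.} In $A_{j,i}$ the factor $\mathrm{diag}(a_s)$ kills $v_j$, since $a_{v_j}=0$. This leaves $\ell+1$ sources, so the rank is at most $\min(\ell+1,d)$. For $A_{i,j}$ the same factorization holds with the roles of $i$ and $j$ exchanged, and the middle factor becomes $\mathrm{diag}(b_s)$. Now $v_j$ always survives, because $b_{v_j}=1$. The source $v_i$ survives if and only if $m^O_{ji}\neq 0$. This gives the upper bounds $\min(\ell+2,d)$ and $\min(\ell+1,d)$ in parts 2 and 3.

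\emph{Lower bounds.} Let $n$ be the number of surviving sources. I need a $\min(n,d)$-minor of $A$ that is not identically zero as a polynomial in the model parameters. It then vanishes only on a measure-zero set, which is exactly the genericity assumption. By Cauchy--Binet, such a minor is a sum over $\min(n,d)$-subsets $S$ of sources of
\[
\det U_{R,S}\cdot\prod_{s\in S}a_s\cdot\det W_{S,C}.
\]
The matrix $W_{S,C}$ is a (homogenized) Vandermonde matrix in the ratios $b_s/a_s$, so it is nonsingular whenever these ratios are pairwise distinct. The $v_j$ column in $A_{i,j}$ corresponds to the point at infinity in the homogenized form. Distinctness holds generically.

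For $U$, I would treat the cumulants $\kappa^{(k)}(s)$ for distinct $(k,s)$ as independent indeterminates. I would pick rows $R$ of distinct orders where possible, plus distinct leading monomials within one order. Expanding $\det U_{R,S}$ along the $\kappa$-variables gives monomials $\prod_{s}\kappa^{(k_{\sigma(s)})}(s)$ attached to different assignments $\sigma$. These monomials cannot cancel across different $S$ or different $\sigma$, so it suffices to exhibit one surviving term.

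\emph{Main obstacle.} The main obstacle is the pairing of the bound $d$ with the row count. When $n>k_2-k_1+1$, some rows must share the same order $k$, and then the argument needs distinct leading monomials of degree $k-k_1$ to separate sources. I would handle this by choosing, within each order block, a Vandermonde-type subset of monomials in $b_s/a_s$.

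A second, subtler point concerns the parameters of the two-variable canonical model obtained via Algorithm~A. These parameters are polynomial images of the original parameters, so genericity in the original model must be transferred to them. I would do this by exhibiting a single witness point of the original model at which the chosen minor is nonzero. The polynomial minor is then nonzero off a measure-zero set, consistent with the genericity assumption.
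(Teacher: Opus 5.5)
Your proposal is correct. The paper gives no proof of this proposition: it quotes it from \cite{schkoda2024causal} and uses it as a black box, so there is no in-paper argument to compare against. What you outline is a sound self-contained proof. The rank-one expansion of each cumulant gives $A=U\,\mathrm{diag}(\cdot)\,W$. The upper bounds come from counting the sources not annihilated by the fixed index: $v_j$ is killed in $A_{j,i}$, and $v_i$ is killed in $A_{i,j}$ exactly when $m^{O}_{ji}=0$. The lower bound comes from one minor that is a nonzero polynomial once the cumulants are treated as independent indeterminates.

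When you write it out, tighten the following four points.

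(i) Your sentence that the $\kappa$-monomials ``cannot cancel across different $\sigma$'' is false whenever two selected rows have the same order. The correct statement follows from the generalized Laplace expansion. Write $P_k$ for the matrix of prefix monomials in order block $k$. Then the coefficient of $\prod_k\prod_{s\in T_k}\kappa^{(k)}(s)$ in $\det A_{R,C}$ is, up to sign, $\prod_k\det P_k(R_k,T_k)\cdot\prod_{s\in S}a_s\cdot\det W_{S,C}$. This is what your ``main obstacle'' paragraph amounts to. It is not even needed when $\min(n,d)\le k_2-k_1+1$, because one row per order then suffices.

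(ii) Choose consecutive exponents: columns $C=\{0,\dots,N-1\}$, and a consecutive window of prefix exponents within each block. ``Distinct ratios'' guarantees nonsingularity only for ordinary Vandermonde matrices, not generalized ones. Also treat the boundary source separately: $v_i=(1:0)$ in Part~1 when $m^{O}_{ji}=0$, and $v_j=(0:1)$ in Parts~2--3. Its prefix monomials vanish except at one end of a block. The simplest fix is to place it in the single-row order-$k_1$ block, where the prefix monomial is $1$.

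(iii) Transferring genericity is easier than finding a witness point. Each canonical cumulant has the form $\sum_{u\in G}c_u^{k}\kappa^{(k)}(u)$, where the groups $G$ of original sources are disjoint and nonempty and each $c_u\neq0$. Here $e_i$ lies in the group of $v_i$ and $e_j$ in the group of $v_j$. So for fixed coefficients, the map from original to canonical cumulants is a linear surjection, and algebraic independence carries over.

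(iv) The one genuinely structural input is that the surviving sources give pairwise distinct points $(a_s:b_s)\in\mathbb{P}^1$. This is really the content of the definition of $\ell$: the confounder columns are pairwise non-proportional and not proportional to $(1,m^{O}_{ji})$ or $(0,1)$. If you want to derive this from the original graph rather than assert it as ``genericity,'' you need a disjoint-paths argument. For example, each confounder has internally disjoint paths to $X_i$ and $X_j$. By Menger, the relevant $2\times2$ minors of total effects then admit a vertex-disjoint path system, and they are generically nonzero by Lindstr\"om--Gessel--Viennot.
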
 
According to \cite{schkoda2024causal}, the smallest possible choice of $(k_1,k_2)$ is 
$(\ell+2, (\ell + 2) + \lceil (-3 + \sqrt{8\ell + 17})/2 \rceil)$.
Define ${A}^{(\ell)}_{j,i}$ as $A^{(k_1, k_2)}_{j,i}$, where $(k_1, k_2)$ is this choice.

Proposition~\ref{thm: latent number} provides a practical criterion for determining $\ell$. If the true number of confounders is $\ell$, then Proposition~\ref{thm: latent number} implies that the matrix $A_{j,i}^{(\ell)}$ generically has rank $\ell+1$. Consequently, for any $r<\ell$,
$\operatorname{rank}\left(A_{j,i}^{(r)}\right)>r+1$. 
Therefore, $\ell$ is the smallest value of $r$ satisfying $\operatorname{rank}\left( A_{j,i}^{(r)} \right) = r+1$. 
Moreover, Items 2 and 3 of Proposition~\ref{thm: latent number} determine whether an ancestral relationship exists between $X_i$ and $X_j$, and, if so, identify its direction.

Let $\tilde{A}^{(\ell)}_{{j}, {i}}$ be a matrix obtained by adding $(1,\!m,\!\dots,\!m^{\ell+1})$ as the first row of $A^{(\ell)}_{{j}, {i}}$.
\begin{proposition}[\cite{schkoda2024causal}]
\label{thm: estimate b}
    Consider the determinant of an $(\ell+2)\times (\ell+2)$ minor of $\tilde{A}^{(\ell)}_{{j}, {i}}$ that contains the first row and treat it as a polynomial in $m$. 
    Then, the roots of this polynomial are $m^{O}_{ji}, m^{OL'}_{j1}, \cdots, m^{OL'}_{j\ell}$.
\end{proposition}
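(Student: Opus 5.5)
The plan is to write every entry of $A^{(\ell)}_{j,i}$ as a sum of $\ell+1$ rank-one contributions, one for each independent source in the two-variable canonical model \eqref{eq: two variables}, and then read off the polynomial roots from a Vandermonde-type factorization. First expand $X_j$. By substituting $v_i = X_i - \sum_h m^{OL'}_{ih}L'_h$, or simply by working with the sources $(L'_1,\dots,L'_\ell,v_i,v_j)$, the cumulant multilinearity in \eqref{eq: cumulant linear} gives for any entry with $a$ copies of $j$ and $b$ copies of $i$:
\begin{align*}
c_{j^a i^b} = \sum_{h=1}^{\ell} (m^{OL'}_{jh})^{a}(m^{OL'}_{ih})^{b}\kappa(L'_h) + (m^{O}_{ji})^{a}\kappa(v_i) + [b=0]\,\kappa(v_j).
\end{align*}
Each row of $A^{(\ell)}_{j,i}$ fixes the leading indices and its order $k$. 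The columns of $A^{(\ell)}_{j,i}$ vary the number $t=0,\dots,\ell+1$ of trailing $j$'s among the last $k_1$ positions, which contain at least one $i$ except in the last column. Writing $\alpha_h := m^{OL'}_{jh}/m^{OL'}_{ih}$ for the latent terms (after the normalization $m^{OL'}_{ih}$ absorbed into the row factor) and $\alpha_0 := m^{O}_{ji}$ for the $v_i$ term (coefficient of $v_i$ in $X_i$ is $1$), the column-$t$ entry becomes $\sum_{s=0}^{\ell} r_s\,\alpha_s^{t}$. Here $r_s$ depends only on the row.

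The key steps are as follows.

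\textbf{Step 1.} Handle the $v_j$ term. It appears only in entries with every index equal to $j$. In the chosen column layout the final $k_1$ positions always include an $i$ except possibly in the last column $t=\ell+1$, which has $k_1=\ell+2$ slots. Rows whose leading indices are all $i$ never touch it. I will therefore restrict to minors avoiding that contamination; if that is not possible, I will absorb the term by noting it contributes only to one row. This needs checking against the exact index layout of \eqref{eq: a matrix}.

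\textbf{Step 2.} Use the representation $A = R\,V$, where $V$ is the $(\ell+1)\times(\ell+2)$ matrix with $V_{s,t}=\alpha_s^{t}$ and $R$ collects the $r_s$. Then the augmented matrix $\tilde A$ with first row $(1,m,\dots,m^{\ell+1})$ factors as
\begin{align*}
\tilde A = \begin{pmatrix}1&0\\0&R\end{pmatrix}\begin{pmatrix}(m^t)_t\\ V\end{pmatrix}.
\end{align*}

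\textbf{Step 3.} Take any $(\ell+2)\times(\ell+2)$ minor containing the first row and all $\ell+2$ columns. By Cauchy--Binet, its determinant equals $\det(R_{\text{rows}})$ times the full Vandermonde determinant in $(m,\alpha_0,\dots,\alpha_\ell)$. This is $\det(R_{\text{rows}})\prod_s(\alpha_s-m)\cdot\prod_{s<s'}(\alpha_{s'}-\alpha_s)$ up to sign. As a polynomial in $m$ of degree $\ell+1$, its roots are exactly $\alpha_0,\dots,\alpha_\ell$. With the normalization fixing the coefficient of $L'_h$ on $X_i$, these roots are $m^O_{ji}, m^{OL'}_{j1},\dots,m^{OL'}_{j\ell}$.

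\textbf{Step 4.} Use genericity to ensure the polynomial is not identically zero. This requires $\det(R_{\text{rows}})\neq 0$ and distinct $\alpha_s$. Both are polynomial conditions in the parameters that fail only on a measure-zero set, provided a single witness exists. I will take the witness from Proposition~\ref{thm: latent number}, part 1: $A^{(\ell)}_{j,i}$ has rank $\ell+1$, so some rows give invertible $R_{\text{rows}}$.

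The main obstacle is Steps 1 and 4. I need to verify that the $v_j$ contribution and the row structure still allow the clean factorization, and that a row subset with invertible $R$ exists. I would first try to choose rows all beginning with $i$ at the lowest order $k_1$, where the entries are pure moments $c_{i^{k_1-t}j^{t}}$. I would fall back on the rank statement if that choice fails.
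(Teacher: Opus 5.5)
Your argument is correct. It is a more explicit version of the reasoning the paper relies on. The paper does not prove this proposition itself; it is cited from \cite{schkoda2024causal}. The in-paper analogue is the proof of Theorem~\ref{thm: estimate b revised} in Appendix~\ref{sec: simple improvement}, which argues by root counting:
\begin{itemize}
\item the minor's determinant is generically a nonzero polynomial in $m$ (deferred to Schkoda et al.);
\item it has degree at most $\ell+1$;
\item it vanishes at each of the $\ell+1$ generically distinct values $m^O_{ji},m^{OL'}_{j1},\dots,m^{OL'}_{j\ell}$, so these are exactly its roots.
\end{itemize}
Your factorization $A^{(\ell)}_{j,i}=RV$ with $V_{s,t}=\alpha_s^t$ gives all of this at once. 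The determinant equals $\det R_S$ times the Vandermonde determinant in $(m,\alpha_0,\dots,\alpha_\ell)$, so the vanishing, the exact degree, and the simplicity of the roots are immediate. The cited non-vanishing claim becomes the concrete condition $\det R_S\neq0$ together with distinct $\alpha_s$. You correctly obtain $\det R_S\neq 0$ for some $S$ from Proposition~\ref{thm: latent number}(1): since $R$ has only $\ell+1$ columns, $\operatorname{rank}(RV)=\ell+1$ forces $\operatorname{rank}R=\ell+1$. The paper's route is shorter but leans on the citation. Yours is self-contained modulo Proposition~\ref{thm: latent number} and tells you exactly which minors are admissible.

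Some clarifications on your Steps~1 and~4:
\begin{itemize}
\item \textbf{Step~1 needs no workaround.} In \eqref{eq: a matrix} the trailing block of length $k_1$ in every column begins with $i$. The last column has $t=k_1-1=\ell+1$ copies of $j$ in that block (e.g.\ $c^{(k_1)}_{i,j,\dots,j}$), not $k_1$. So every entry carries at least one index $i$, and $\kappa^{(k)}(v_j)$ never enters.
\item \textbf{The scope of Step~4 is right.} Step~4 only produces \emph{some} row set $S$ with $\det R_S\neq0$, and that is the correct reading of the statement rather than a weakness. Index rows by (order $k$, number $a$ of leading $j$'s); then $R_{(k,a),0}=(m^O_{ji})^a\kappa^{(k)}(v_i)$. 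If $m^O_{ji}=0$, every row with $a\ge1$ lies in the span of $(\alpha_1^t)_t,\dots,(\alpha_\ell^t)_t$. For $\ell=2$ (so $k_1=4$, $k_2=6$), the minor formed by the added row and rows $(5,1),(6,1),(6,2)$ is therefore identically zero in $m$.
\item \textbf{Your preferred first choice of rows does not have enough rows.} Order $k_1$ contributes a single row. Even the all-$i$-leading rows across all orders number only $k_2-k_1+1$, which is less than $\ell+1$ once $\ell\ge3$. So the rank argument is what actually carries Step~4.
\end{itemize}
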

Proposition~\ref{thm: estimate b} identifies total effects 
$m^{O}_{ji}, m^{OL'}_{j1}, \cdots, m^{OL'}_{j\ell}$ up to permutation. 
\begin{proposition}[\cite{schkoda2024causal}]
\label{lem: estimate e cumulant}
Define
$\bm{m}_{ji}^{O}:=
\left(
1,
m_{ji}^{O},
(m_{ji}^{O})^{2},
\ldots,
(m_{ji}^{O})^{k-1}
\right)^{\top}$. 
Similarly, define
\(
\bm{m}_{j1}^{OL'},\ldots,\bm{m}_{j\ell}^{OL'}
\).
Then, the system of equations
\begin{align}
    \label{eq: system}
    &\left[
        \bm{m}_{ji}^O, \bm{m}_{j1}^{OL}, \dots, \bm{m}_{j\ell}^{OL}
    \right]
    \left[\kappa^{(k)}({v}_{i}), \kappa^{(k)}(L^\prime_{1}), \dots, \kappa^{(k)}(L^\prime_{\ell}) 
    \right]^{\top}\notag\\
    &\qquad =\!\left[
        c^{(k)}_{i, i, \dots, i}, c^{(k)}_{i, i, \dots, j}, \dots, c^{(k)}_{i, j, \dots, j}
    \right]^{\top}\!,
\end{align}
is generically uniquely solvable if $k \geq \ell+1$.
\end{proposition}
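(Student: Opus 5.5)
The plan is to check that the linear system \eqref{eq: system} is exactly the list of the $k$ mixed cumulants $c^{(k)}_{i,\dots,i,j,\dots,j}$ expressed through the two-variable canonical model \eqref{eq: two variables}. Then show that its coefficient matrix is a Vandermonde matrix with full column rank.

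First, write out the cumulants. Consider the $k$-th order cumulant with $k-s$ copies of index $i$ and $s$ copies of $j$, for $s=0,\dots,k-1$. Apply the multilinearity formula \eqref{eq: cumulant linear} to \eqref{eq: two variables}, using the independent sources $v_i,v_j,L'_1,\dots,L'_\ell$.

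\begin{itemize}
\item The loading of $v_i$ on $X_i$ is $1$ and on $X_j$ is $m^{O}_{ji}$.
\item The source $v_j$ does not load on $X_i$. Since $s\le k-1$, at least one index equals $i$, so $v_j$ contributes nothing.
\item For $L'_h$, rescale each latent so that its coefficient on $X_i$ equals $1$. This is legitimate because every $L'_h\in\mathrm{Conf}(X_i,X_j)$ generically has a nonzero effect on $X_i$. After rescaling, $m^{OL'}_{jh}$ denotes the ratio of the two loadings, and $\kappa^{(k)}(L'_h)$ absorbs the scale.
\end{itemize}

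The cumulant then equals
\[
(m^{O}_{ji})^{s}\kappa^{(k)}(v_i)+\sum_{h=1}^{\ell}(m^{OL'}_{jh})^{s}\kappa^{(k)}(L'_h).
\]
This is precisely row $s+1$ of \eqref{eq: system}. So the true cumulant vector solves the system, which gives existence.

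Second, prove uniqueness. The coefficient matrix is the $k\times(\ell+1)$ Vandermonde matrix with nodes $m^{O}_{ji},m^{OL'}_{j1},\dots,m^{OL'}_{j\ell}$. When $k\ge \ell+1$, its top $(\ell+1)\times(\ell+1)$ block has determinant $\prod_{a<b}(t_b-t_a)$ over these nodes. This is nonzero as soon as the nodes are pairwise distinct. A full-column-rank matrix gives a unique solution whenever a solution exists.

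The main obstacle is justifying, under the genericity assumption, that the nodes are pairwise distinct. Each node is a rational function of the coefficients in $\bm{B}$ and $\bm{\Lambda}$, given by ratios of path sums of total effects. A coincidence $t_a=t_b$ is therefore a polynomial equation in the parameters. It suffices to show that this polynomial is not identically zero, so that its zero set has Lebesgue measure zero.

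To do this, I would exhibit for each pair of nodes one parameter point where they differ. Different latent confounders $L'_a,L'_b$ correspond to different sets of paths, for example through a distinct edge into $X_j$ or $X_i$. Perturbing a single edge coefficient that appears in one ratio but not the other moves one node and leaves the other fixed.

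For the pair $m^{O}_{ji}$ versus $m^{OL'}_{jh}$, note that $m^{O}_{ji}$ depends only on the paths from $X_i$ to $X_j$. The ratio $m^{OL'}_{jh}$ additionally involves paths from $L'_h$ to $X_j$ that avoid $X_i$, because $L'_h$ is a confounder. Scaling the first edge of such a path changes $m^{OL'}_{jh}$ but not $m^{O}_{ji}$.

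With distinctness holding generically, the Vandermonde determinant is a nonzero polynomial. Hence the system is uniquely solvable outside a measure-zero set, which completes the argument.
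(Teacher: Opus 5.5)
Your existence step and your uniqueness step are correct, and together they are the standard argument. Existence expands $c^{(k)}_{i,\dots,i,j,\dots,j}$ through the two-variable model, with $v_j$ dropping out and each latent rescaled to unit loading on $X_i$. Uniqueness uses that a $k\times(\ell+1)$ Vandermonde matrix with pairwise distinct nodes has full column rank once $k\ge\ell+1$. The paper gives no proof of this proposition; it imports it from Schkoda et al.\ and elsewhere simply asserts that the $\ell+1$ roots are generically distinct.

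The gap is in your justification of that distinctness. You rely on the principle that an edge weight occurring in one node but not the other ``moves'' the first node. This is false in general. A node $t_V=m_{jV}/m_{iV}$ is homogeneous of degree $0$ in the out-edge weights of $V$, so a weight can occur in both numerator and denominator and cancel. For example, if $V$ has a single child $A$ with $A\in\mathrm{Conf}(X_i,X_j)$, then $t_V\equiv t_A$, even though $t_V$ contains the weight of $V\to A$ and $t_A$ does not. Such a $V$ is excluded from $\mathrm{Conf}(X_i,X_j)$ only by the vertex-disjoint paths $\pi,\pi'$ in the definition. Your argument never uses them, so as written it would equally ``prove'' distinctness where distinctness fails. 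Two smaller points: two confounders need not differ in their last edges into $X_i$ or $X_j$, and the normalization fixes one out-edge of each latent to $1$, so ``scale the first edge'' is not always a free parameter.

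The repair is short and uses the disjoint paths explicitly.

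\emph{Confounder versus $m^O_{ji}$.} Your own remark about paths avoiding $X_i$ settles this case without any perturbation. For $V\in\mathrm{Conf}(X_i,X_j)$, split directed paths at $X_i$. This gives $m_{jV}=m_{iV}\,m^{O}_{ji}+P_V$, where $P_V$ is the sum of path monomials over directed paths from $V$ to $X_j$ avoiding $X_i$. Hence $t_V-m^{O}_{ji}=P_V/m_{iV}$. Moreover $P_V\not\equiv0$, because $\pi'$ avoids $X_i$ and every path monomial enters with coefficient $+1$.

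\emph{Two confounders.} Take $V_a\neq V_b$ and label them so that $V_a\notin\mathrm{Des}(V_b)$; then $t_{V_b}$ involves no out-edge weight $\alpha_W$ of $V_a$. Write $t_{V_a}=\sum_W\alpha_WP_W/\sum_W\alpha_WQ_W$, where $P_W,Q_W$ are the total effects of the child $W$ on $X_j,X_i$. This expression is free of the $\alpha_W$ only if all pairs $(P_W,Q_W)$ are proportional. Now let $W,W'$ be the children of $V_a$ on $\pi,\pi'$, and set every edge weight off $\pi\cup\pi'$ to zero. Then $P_W=Q_{W'}=0$, while $P_{W'}Q_W$ is a nonzero monomial, so $P_WQ_{W'}-P_{W'}Q_W\not\equiv0$. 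Hence $t_{V_a}-t_{V_b}$ depends on some $\alpha_W$ and is not identically zero.

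With both cases in place, the Vandermonde determinant is a nonzero rational function of the coefficients, which is what your argument needed.
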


When \eqref{eq: system} is uniquely solvable, each root returned by
Proposition~\ref{thm: estimate b} is associated with a unique
$k$-th order cumulant of a disturbance or latent variable, 
\[
\left(
m_{ji}^O, \kappa^{(k)}(v_i)
\right), 
\left(m_{ji}^{OL}, \kappa^{(k)}(L_1^\prime)
\right), \ldots,
\left(
m_{j \ell}^{OL}, \kappa^{(k)}(L_\ell^\prime)
\right). 
\]
Although the roots and cumulants themselves are
identified only up to permutation, the correspondence between a root
and its associated cumulant is uniquely determined.
Hereafter, without any additional explanation, we focus on orders $k$ for which \eqref{eq: system} is solvable.

\section{Proposed Method}
\label{sec: proposed method}
We propose a method to recover the sparsest DAG under the genericity assumption. In the following, let $\mathcal{G}$ denote the sparsest DAG in the observational equivalence class. 
The proposed method first applies Proposition~\ref{thm: latent number} to infer all ancestral relationships among the observed variables and identify those with no observed ancestors as observed sources. The same proposition also estimates the number of latent confounders for each pair of observed variables. We then use the total effects obtained from Proposition~\ref{thm: estimate b}, together with their correspondences to the $k$-th order cumulants of the disturbances and latent confounders established by Proposition~\ref{lem: estimate e cumulant}, to determine the parent--child relationships between each observed source and its descendants. We then identify the sparsest latent-to-observed structure by selecting, among the candidate total effects obtained above, those that minimize the numbers of latent confounders between pairs of observed variables. The identified sources are then residualized from the remaining observed variables, after which the above estimation procedure is repeated on the updated variables. Recursively repeating this process recovers the sparsest DAG. Proofs of all theorems in this section are given in Appendix~\ref{sec: proof of thm}. 

Based on the pairwise ancestral relationships estimated using Proposition~\ref{thm: latent number}, variables without observed ancestors are treated as observed sources. 
Let $X_s$ be one such source.
Let $\widetilde{\mathrm{Ch}}(X_s)\subset\mathrm{Ch}(X_s)$ be the set of observed variables whose observed ancestors consist only of observed sources and include $X_s$.
To identify the remaining children of $X_s$, we recursively examine its descendants while maintaining two sets: the closed set $\bm X_{\mathrm{closed}}$, containing variables already identified as children of $X_s$, and the open set $\bm X_{\mathrm{open}}$, containing descendants whose parent-child relationships with $X_s$ have yet to be determined. Initially, 
\begin{align}
    \label{eq: open list}
    &\bm{X}_{\mathrm{\mathrm{closed}}} = \widetilde{\mathrm{Ch}}(X_{s}), \notag
    \\
    &{\bm{X}}_{\mathrm{\mathrm{open}}}
    = \Big\{ X_{j}\in \mathrm{Des}(X_{s})\setminus X_{\mathrm{\mathrm{closed}}}:{ \forall X_{i}\in \bm{X}_{{\mathrm{\mathrm{closed}}}},~\mathrm{Anc}(X_{j}) \cap \mathrm{Des}(X_{i}) \subseteq \bm{X}_{\mathrm{\mathrm{closed}}}} \Big\}.
\end{align}
By definition of $\bm{X}_{\mathrm{\mathrm{open}}}$, no observed variable in $\bm{X} \setminus \bm{X}_{\mathrm{\mathrm{closed}}}$ lies on a directed path between $\bm{X}_{\mathrm{\mathrm{closed}}}$ and $\bm{X}_{\mathrm{\mathrm{open}}}$.
\begin{lemma}
\label{thm: total effects and edges}
    For any $X_{j} \in \bm{X}_{\mathrm{open}}$,
    $b_{js}$ can be written as
    \begin{align}
    \label{eq: total effects and edges}
        b_{js}
        = 
        m^{O}_{js}
        - \sum_{i: X_{i} \in \widetilde{\mathrm{Ch}}(X_{s})} b_{is} \, m^{O}_{ji}.
    \end{align}
\end{lemma}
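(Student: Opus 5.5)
We plan to use the standard path decomposition of total effects in a linear SEM: since $\bm{M}^{O}=(\bm{I}-\bm{B})^{-1}=\sum_{t\ge 0}\bm{B}^{t}$, the total effect $m^{O}_{js}$ equals the sum, over all directed paths from $X_s$ to $X_j$ in $\mathcal{G}^O$, of the products of edge coefficients along the path. We split these paths according to their first edge out of $X_s$. This gives the one-step recursion
\[
m^{O}_{js} \;=\; b_{js} + \sum_{i:\,X_i\in \mathrm{Ch}(X_s)\setminus\{X_j\}} b_{is}\, m^{O}_{ji},
\]
which also follows algebraically from $(\bm{I}-\bm{B})\bm{M}^{O}=\bm{I}$ read as $\bm{M}^{O}=\bm{I}+\bm{M}^{O}\bm{B}$, taking the $(j,s)$ entry. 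The rest of the argument is to show that the sum on the right can be restricted to $X_i\in\widetilde{\mathrm{Ch}}(X_s)$.

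For this, we will show that every child $X_i$ of $X_s$ with $X_i\neq X_j$ and $m^{O}_{ji}\neq 0$ (generically, $X_i\in\mathrm{Anc}(X_j)$) already lies in $\bm{X}_{\mathrm{closed}}=\widetilde{\mathrm{Ch}}(X_s)$. Take such an $X_i$. Then $X_i\in\mathrm{Anc}(X_j)\cap\mathrm{Des}(X_s)$. We will use the defining property of $\bm{X}_{\mathrm{open}}$: no observed variable outside $\bm{X}_{\mathrm{closed}}$ lies on a directed path between $\bm{X}_{\mathrm{closed}}$ and $\bm{X}_{\mathrm{open}}$. We also use that $\widetilde{\mathrm{Ch}}(X_s)$ is nonempty whenever $X_s$ has descendants, since a child of $X_s$ that is minimal in causal order among $\mathrm{Des}(X_s)$ has only sources as observed ancestors.

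The argument then proceeds in two steps.
\begin{enumerate}
\item We locate a closed variable $X_c$ upstream of $X_i$, so that $X_i\in\mathrm{Anc}(X_j)\cap\mathrm{Des}(X_c)$.
\item By the definition of $\bm{X}_{\mathrm{open}}$, this forces $X_i\in\bm{X}_{\mathrm{closed}}$.
\end{enumerate}
Suppose instead that $X_i$ has no closed ancestor. Since $X_i$ is a child of $X_s$ that is not in $\widetilde{\mathrm{Ch}}(X_s)$, it has some non-source observed ancestor. Following ancestors of $X_i$ back to a minimal one in $\mathrm{Des}(X_s)$ should produce an element of $\widetilde{\mathrm{Ch}}(X_s)$ lying in $\mathrm{Anc}(X_i)$.

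The main obstacle is this case of a child $X_i\notin\widetilde{\mathrm{Ch}}(X_s)$ whose non-source ancestors are not descendants of $X_s$. In that situation $X_i$ need not descend from any closed variable. Here we will rely on the intended reading of $\bm{X}_{\mathrm{open}}$ as the set of variables whose ancestors within $\mathrm{Des}(X_s)$ are closed, i.e.\ $\mathrm{Anc}(X_j)\cap\mathrm{Des}(X_s)\subseteq\bm{X}_{\mathrm{closed}}$. This reading is what the sentence following \eqref{eq: open list} asserts, and it is needed because $X_i$ is itself a descendant of $X_s$ lying on a path from $X_s$ to $X_j$. With that property, $X_i\in\bm{X}_{\mathrm{closed}}$ immediately.

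Finally, $X_j\notin\bm{X}_{\mathrm{closed}}$, so the $i=j$ term does not arise. Children of $X_s$ that are not ancestors of $X_j$ contribute $m^{O}_{ji}=0$. Closed variables that are not ancestors of $X_j$ likewise contribute zero, so the sum may be written over all of $\widetilde{\mathrm{Ch}}(X_s)$. Rearranging the recursion gives \eqref{eq: total effects and edges}.
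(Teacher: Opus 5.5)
Your proposal follows the paper's proof: split $m^{O}_{js}$ by the first edge out of $X_s$, then show that on every path of length at least two from $X_s$ to $X_j$ the node after $X_s$ lies in $\widetilde{\mathrm{Ch}}(X_s)$; the paper likewise just asserts that every ancestor of a node in $\bm{X}_{\mathrm{open}}$ lies in $\{X_s\}\cup\widetilde{\mathrm{Ch}}(X_s)$ or is not a descendant of $X_s$, which is essentially your strengthened reading of $\bm{X}_{\mathrm{open}}$. Drop the nonemptiness claim and the minimal-ancestor detour, because both are false: with a second source $X_t$ and $X_t\to X_a\to X_i\leftarrow X_s$, $X_i\to X_j$, one gets $\widetilde{\mathrm{Ch}}(X_s)=\emptyset$, the literal definition gives $\bm{X}_{\mathrm{open}}=\{X_i,X_j\}$, and $m^{O}_{js}=b_{js}+b_{is}b_{ji}\neq b_{js}$, so the strengthened reading is genuinely required and does not follow from the sentence after the definition of $\bm{X}_{\mathrm{open}}$.
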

According to Lemma~\ref{thm: total effects and edges}, the parent-child relationship between $X_s$ and $X_j$ can be determined by testing whether $b_{js}=0$. 
However, Proposition~\ref{thm: estimate b} identifies the total effects appearing in \eqref{eq: total effects and edges} only up to permutation, so $b_{js}$ cannot be computed directly.

Denote by $\mathcal I$ and $\mathcal{J}$ the index sets of $\bm{X}_{\mathrm{closed}}$ and $\bm{X}_{\mathrm{open}}$, respectively. 
For $j\in\mathcal J$ and $i\in\mathcal I$, let $\bm m^O_{js}$ and $\bm b_{is}$ denote the sets of candidate values of $m^O_{js}$ and $b_{is}$, respectively, returned by Proposition~\ref{thm: estimate b}. 
Initially, $b_{is}=m^O_{is}$.

As discussed in the previous section, Proposition~\ref{lem: estimate e cumulant} associates each candidate total effect with the corresponding $k$-th order cumulant of a disturbance or latent confounder. For any $i\in\mathcal I$ and $j\in\mathcal J$, the systems \eqref{eq: system} for the pairs $(X_s,X_i)$ and $(X_s,X_j)$ always share the $k$-th order cumulant of $e_s$. Under the genericity assumption, distinct disturbances and latent confounders have distinct $k$-th order cumulants. Therefore, we can choose $\alpha_{js}\in\bm m^O_{js}$ and $\beta_{is}\in\bm b_{is}$ so that they correspond to the same $k$-th order cumulant. Such a choice is not necessarily unique, since the two systems may also share the $k$-th order cumulants of latent confounders common to the pairs $(X_s,X_i)$ and $(X_s,X_j)$.

Define the residualized variables
\[
\tilde{X}_i=X_i-\beta_{is}X_s,\qquad \tilde{X}_j=X_j-\alpha_{js}X_s.
\]
By Lemma~\ref{lem: induced} in Appendix~\ref{sec: some thm}, $\tilde{X}_i$ and $\tilde{X}_j$ can be regarded as observed variables in an induced canonical LvLiNGAM. Applying Propositions~\ref{thm: estimate b} and \ref{lem: estimate e cumulant} to this induced model yields a candidate set for the total effect from $\tilde{X}_i$ to $\tilde{X}_j$. We denote this set by $\bm m^O_{ji.s}$. 

Theorem \ref{cor: total effects and edges} provides a criterion for determining whether $X_s \in \mathrm{Pa}(X_j)$ for each $j \in \mathcal{J}$.

\begin{theorem}
\label{cor: total effects and edges}
Under the genericity assumption, $X_s\notin \mathrm{Pa}(X_j)$ if and only if there exist
\begin{align}
\label{eq: total effects and edges generalized}
&\alpha_{js}\in \bm{m}^{O}_{js},\quad
\bm{\beta}_{\mathcal{I},s}\in \prod_{i\in\mathcal{I}}\bm{b}_{is},\quad
\bm{\gamma}_{j,\mathcal{I}.s} \in \prod_{i\in\mathcal{I}}\bm{m}^{O}_{ji.s} \notag\\
&\qquad \text{s.t.}\qquad
\alpha_{js}-\bm{\beta}_{\mathcal{I},s}^{\top}\bm{\gamma}_{j,\mathcal{I}.s}=0,
\end{align}
where $\alpha_{js}$ and $\beta_{is}$, $i \in \mathcal{I}$ are chosen so that their associated $k$-th order cumulants are equal.
\end{theorem}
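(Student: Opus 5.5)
The plan is to treat the two directions of Theorem~\ref{cor: total effects and edges} separately. The ``only if'' direction will come from exhibiting the \emph{true} choice of candidates and reducing to Lemma~\ref{thm: total effects and edges}. The ``if'' direction will be a genericity argument over the finitely many admissible candidate tuples.

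\emph{Step 1 (true choice).} By Proposition~\ref{thm: estimate b}, $m^O_{js}\in\bm m^O_{js}$. For $i\in\mathcal I$, we have $X_i\in\widetilde{\mathrm{Ch}}(X_s)$, so every observed ancestor of $X_i$ is an observed source. I will first check that $m^O_{is}=b_{is}$; this is plausible because no directed path $X_s\to\cdots\to X_i$ passes through another observed variable, since sources have no observed parents. Hence $b_{is}\in\bm b_{is}$. By Proposition~\ref{lem: estimate e cumulant}, both $m^O_{js}$ and $b_{is}$ are paired with $\kappa^{(k)}(e_s)$ in their respective systems, so this choice satisfies the matching constraint.

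Next, with $\beta_{is}=b_{is}$ and $\alpha_{js}=m^O_{js}$, I use Lemma~\ref{lem: induced} to view $\tilde X_i=X_i-b_{is}X_s$ and $\tilde X_j=X_j-m^O_{js}X_s$ as observed variables of an induced canonical LvLiNGAM. In this model $X_s$ has been removed. Because no observed variable outside $\bm X_{\mathrm{closed}}$ lies on a directed path from $\bm X_{\mathrm{closed}}$ to $\bm X_{\mathrm{open}}$, the total effect of $\tilde X_i$ on $\tilde X_j$ is $m^O_{ji}$. Thus $m^O_{ji}\in\bm m^O_{ji.s}$ by Proposition~\ref{thm: estimate b}.

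\emph{Step 2 (``only if'').} Substituting these true values, the left-hand side of \eqref{eq: total effects and edges generalized} equals $m^O_{js}-\sum_i b_{is}m^O_{ji}$. By Lemma~\ref{thm: total effects and edges} this is $b_{js}$, which is $0$ when $X_s\notin\mathrm{Pa}(X_j)$. So a witness tuple exists.

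\emph{Step 3 (``if'').} Suppose $X_s\in\mathrm{Pa}(X_j)$. For the true tuple, the left-hand side equals $b_{js}\neq 0$, since $b_{js}$ is a free nonzero coefficient.

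The remaining admissible tuples are the ones where some matched pair $(\alpha_{js},\beta_{is})$ corresponds to a common latent confounder $L'_h$ of $(X_s,X_i)$ and $(X_s,X_j)$, or where some $\gamma_{ji}$ is a latent-type root of the induced model. There are finitely many such tuples, and each is determined by a combinatorial pattern of which root is chosen. For each pattern, $\alpha_{js}-\bm\beta_{\mathcal I,s}^\top\bm\gamma_{j,\mathcal I.s}$ is a fixed rational function $R$ of the coefficients in $(\bm\Lambda,\bm B)$ and the cumulants of $\bm u$. The roots themselves are ratios of total effects, for instance $m^{OL'}_{jh}/m^{OL'}_{sh}$. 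Consequently, $R\neq0$ holds outside a Lebesgue-null set as soon as $R$ is not identically zero on the parameter space of $\mathcal G$. Taking the finite union of these null sets then yields the generic statement.

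\emph{Main obstacle.} The hard part is showing that each non-true $R$ is not identically zero. My first attempt is to isolate a coefficient that enters $R$ in only one term. For a latent-type choice, $\alpha_{js}$ depends on the edge weight $\lambda_{jh}$ of $L'_h\to X_j$, or on an effect of $L'_h$ reaching $X_j$ through a path avoiding $\bm X_{\mathrm{closed}}$. I expect $\bm\beta$ and $\bm\gamma$ either not to depend on this parameter, or to depend on it only through the residualized structure in a way that cancels differently. Varying that parameter alone should then change $R$.

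If this separation fails, the fallback is to exhibit a single parameter point where $R\neq0$. Concretely, I would set most coefficients to zero while keeping $b_{js}$ and one latent path, and compute $R$ explicitly there. Because $R$ is analytic on a connected parameter domain, nonvanishing at one point implies nonvanishing almost everywhere.

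One further point needs care. Sparsity of $\mathcal G$ excludes swap-equivalent representations, so a latent-type tuple cannot coincide identically with a true-type tuple for some denser DAG. This is exactly where the assumption that $\mathcal G$ is the sparsest member of its observational equivalence class is used.
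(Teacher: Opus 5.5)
Your route matches the paper's. You exhibit the true tuple and use Lemma~\ref{thm: total effects and edges} for the ``only if'' direction, and you appeal to genericity over the finitely many candidate tuples for the ``if'' direction; there you are more explicit than the paper, which only asserts it.

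\textbf{The gap.} It is the Step~1 claim that $m^O_{js}$ and $b_{is}$ are both paired with $\kappa^{(k)}(e_s)$. Proposition~\ref{lem: estimate e cumulant} pairs the total-effect root of the pair $(X_s,X_i)$ with $\kappa^{(k)}(v_s)$, the disturbance of the \emph{pairwise} canonical model. That $v_s$ absorbs every latent parent of $X_s$ outside $\mathrm{Conf}(X_s,X_i)$, and this set depends on $i$.

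\textbf{A counterexample.} Take $X_s=L+e_s$, $X_2=b_{2s}X_s+e_2$, $X_3=b_{32}X_2+\lambda_{3L}L+e_3$.
\begin{itemize}
\item The model is canonical and is the sparsest member of its class: the only admissible swap, of $L$ and $e_s$, adds the edge $X_s\to X_3$.
\item Here $\widetilde{\mathrm{Ch}}(X_s)=\{X_2\}$, $X_3\in\bm X_{\mathrm{open}}$, and $X_s\notin\mathrm{Pa}(X_3)$.
\item The pair $(X_s,X_2)$ is unconfounded, so $\bm b_{2s}=\{b_{2s}\}$ with associated cumulant $\kappa^{(k)}(L)+\kappa^{(k)}(e_s)$.
\item The pair $(X_s,X_3)$ is confounded by $L$. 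Its roots $b_{32}b_{2s}$ and $b_{32}b_{2s}+\lambda_{3L}$ are paired with $\kappa^{(k)}(e_s)$ and $\kappa^{(k)}(L)$, respectively.
\end{itemize}
Since both cumulants are nonzero, no tuple meets the matching constraint. Yet the unmatched true tuple gives $b_{32}b_{2s}-b_{2s}b_{32}=0$. So the ``only if'' direction with matched cumulants is not just unproved by your argument; it fails for this model. The paper's proof asserts the same existence, relying on the main-text claim that the two systems ``always share'' $\kappa^{(k)}(e_s)$. This is a gap you inherit, not one the paper closes.

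\textbf{Step~3.} Your ``main obstacle'' has a short resolution that needs neither the latent weights nor sparsity of $\mathcal G$. Genericity of the parameters of $\mathcal G$ suffices, so your final paragraph can be dropped.
\begin{itemize}
\item For each independent component $u$, write $M_{ju}=m^O_{js}M_{su}+N_{ju}$. Here $N_{ju}$ sums over directed paths from $u$ to $X_j$ avoiding $X_s$, so it does not involve $b_{js}$.
\item Each candidate in $\bm m^O_{js}$ is a ratio $M_{ju}/M_{su}=m^O_{js}+N_{ju}/M_{su}$, i.e.\ $b_{js}$ plus a term free of $b_{js}$.
\item Candidates in $\bm b_{is}$ do not involve $b_{js}$, because $X_j\notin\mathrm{Anc}(X_i)$.
\item If $\alpha_{js}=M_{ju^*}/M_{su^*}$, the coefficient of $u$ in $\tilde X_j=X_j-\alpha_{js}X_s$ is $N_{ju}-N_{ju^*}M_{su}/M_{su^*}$. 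Hence candidates in $\bm m^O_{ji.s}$ are also free of $b_{js}$.
\end{itemize}
Therefore every $\alpha_{js}-\bm\beta_{\mathcal I,s}^{\top}\bm\gamma_{j,\mathcal I.s}$ equals $b_{js}$ plus a function of the other parameters, and is not identically zero.

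This argument never uses the matching constraint, and neither does your true-tuple argument once the constraint is removed. So at the population level the equivalence holds if the matching requirement is dropped or suitably relaxed. That is the natural repair.
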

Once the parent-child relationship between $X_{s}$ and $X_{j}$ is determined using Theorem~\ref{cor: total effects and edges}, we update $\bm{X}_{\mathrm{closed}}$, $\bm{X}_{\mathrm{open}}$, and $\widetilde{\mathrm{Ch}}(X_{s})$ as follows:
{\small
\begin{align*}
    \bm{X}_{\mathrm{closed}} \gets \bm{X}_{\mathrm{closed}} \cup \{X_{j}\}, 
    \ \bm{X}_{\mathrm{open}} \gets \bm{X}_{\mathrm{open}} \setminus \{X_{j}\}, 
    \
    \widetilde{\mathrm{Ch}}(X_{s}) \gets 
    \begin{cases}
       \widetilde{\mathrm{Ch}}(X_{s}),  & X_{s} \not \in \mathrm{Pa}(X_{j}), \\
       \widetilde{\mathrm{Ch}}(X_{s}) \cup \{X_{j}\},  & X_{s} \in \mathrm{Pa}(X_{j}).
    \end{cases}
\end{align*}
}
When $\bm{X}_{\mathrm{open}}$ becomes empty, it is reinitialized using
\eqref{eq: open list}.
As shown in the proof of Lemma~\ref{thm: total effects and edges} in Appendix~\ref{sec: proof of thm}, Lemma~\ref{thm: total effects and edges} still holds after updating $\widetilde{\mathrm{Ch}}(X_s)$. 
The coefficients of the directed edges from $X_s$ to the variables in $\widetilde{\mathrm{Ch}}(X_s)$ can also be computed by Lemma~\ref{thm: total effects and edges}.
Therefore, the parent-child relationships of $X_s$ and variables in $\bm{X}_\mathrm{open}$ can still be identified by Theorem~\ref{cor: total effects and edges}.

After determining all parent-child relationships between $X_s$ and its descendants, we residualize each descendant $X_j\in\mathrm{Des}(X_s)$ with respect to $X_s$ before proceeding to the next iteration:
\begin{align}
\label{eq: update}
\tilde{X}_j = X_j-\alpha_{js}X_s.
\end{align}
For each $j\in\mathcal J$, choose $\alpha_{js}$ from triples
$(\alpha_{js},\bm{\beta}_{\mathcal I,s},\bm{\gamma}_{j,\mathcal I,s})$
satisfying \eqref{eq: total effects and edges generalized} whenever available; and otherwise, choose it randomly from $\bm m^O_{js}$, so that all selected values of $\alpha_{js}$ correspond to the same $k$-th order cumulant.

Let $\mathcal R=[p]\setminus\{s\}$. For each $j\in\mathcal R$, let $\alpha_{js}$ be defined as above when $X_j\in\mathrm{Des}(X_s)$, and set $\alpha_{js}=0$ when $X_j\notin\mathrm{Des}(X_s)$. Let
\begin{align*}
    \bm X_{\mathcal R}=(X_j)_{j\in\mathcal R}^{\top}, \qquad
    \bm\alpha_{\mathcal R,s}=(\alpha_{js})^\top_{j\in\mathcal R}.
\end{align*}
Then, the residualized variables $\tilde{\bm X}_{\mathcal R}$ after removing $X_s$ are given by
\begin{equation}
\label{eq: update_vec}
\tilde{\bm X}_{\mathcal R} = (\tilde{X}_j)_{j\in\mathcal R}^{\top}
:=
\bm X_{\mathcal R}
-
\bm\alpha_{\mathcal R,s}X_s. 
\end{equation}
By Lemma~\ref{lem: induced} and Corollary~\ref{cor: invariant B} in Appendix~\ref{sec: some thm}, we obtain the following theorem. 
\begin{theorem}
\label{thm: unchanged}
Under the genericity assumption, $\tilde{\bm X}_{\mathcal R}$ admits an LvLiNGAM representation
whose observed DAG is the induced subgraph of $\mathcal G^O$ on $\bm X\setminus\{X_s\}$.
\end{theorem}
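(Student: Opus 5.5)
The idea is to substitute the structural equations into \eqref{eq: update_vec} and check that the observed-to-observed coefficients survive unchanged, with everything involving $X_s$ absorbed into new independent latent sources.

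\textbf{Step 1: rewrite the structural equations without $X_s$.} Since $X_s$ is an observed source, it has no observed parents, so
\[
X_s=\sum_{h}\lambda_{sh}L_h+e_s .
\]
For each $j\in\mathcal R$ we have
\[
X_j=\sum_{i\in\mathcal R} b_{ji}X_i+b_{js}X_s+\sum_h\lambda_{jh}L_h+e_j .
\]
Substituting $X_i=\tilde X_i+\alpha_{is}X_s$ gives
\[
\tilde X_j=\sum_{i\in\mathcal R}b_{ji}\tilde X_i+\Big(b_{js}+\sum_{i\in\mathcal R}b_{ji}\alpha_{is}-\alpha_{js}\Big)X_s+\sum_h\lambda_{jh}L_h+e_j .
\]
So the coefficient matrix among $\tilde{\bm X}_{\mathcal R}$ is exactly $\bm B$ restricted to $\mathcal R$, i.e.\ the induced subgraph of $\mathcal G^O$. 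What remains is to show that the residual term
\[
\delta_j X_s,\qquad \delta_j:=b_{js}+\sum_i b_{ji}\alpha_{is}-\alpha_{js},
\]
together with the latent terms, can be written as a combination of mutually independent latent sources plus $e_j$. This would give a legitimate (possibly non-canonical, but convertible) LvLiNGAM.

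\textbf{Step 2: identify what $\alpha_{js}$ removes.} By construction, every selected $\alpha_{js}$ corresponds to the same $k$-th order cumulant. By the genericity assumption and Proposition~\ref{lem: estimate e cumulant}, that cumulant is either $\kappa^{(k)}(e_s)$ for all $j$, or $\kappa^{(k)}(L_h)$ for a single latent $L_h$ that can be swapped with $e_s$ in the sense of \citet{salehkaleybar2020learning}.

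\emph{Case $e_s$.} Here $\alpha_{js}=m^O_{js}$. The recursion $m^O_{js}=b_{js}+\sum_i b_{ji}m^O_{is}$ then gives $\delta_j=0$, so $X_s$ drops out entirely. Moreover, $\tilde X_j$ depends on the $L_h$ only through $\lambda_{jh}$ and $e_j$, while the extra contribution from $X_s$'s latent parents enters via $(m^{OL}_{jh}-m^O_{js}\lambda_{sh})$. This is the content of Lemma~\ref{lem: induced}, which I will invoke to certify that the result is an LvLiNGAM with the same observed $\bm B_{\mathcal R}$. I expect the cleanest way to do this is at the mixing-matrix level: $\tilde{\bm X}_{\mathcal R}=(\bm M_{\mathcal R}-\bm\alpha_{\mathcal R,s}\bm M_s)\bm u$. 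The $e_s$ column vanishes, and the remaining columns factor as $(\bm I-\bm B_{\mathcal R})^{-1}$ times a matrix whose latent columns are the new loadings.

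\emph{Case $L_h$.} Here I would use the swap: exchanging the columns of $L_h$ and $e_s$ produces an observationally equivalent representation in which $L_h$ plays the role of $e_s$. Corollary~\ref{cor: invariant B} should then state that the observed $\bm B$ among the non-source variables is invariant under such a swap. Applying the first case to the swapped representation yields the claim.

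\textbf{Step 3: verify the residual latent structure.} In either case, the remaining latent columns (original latents, plus $e_s$ in the swapped case) multiplied by $(\bm I-\bm B_{\mathcal R})^{-1}$ give $\tilde{\bm X}_{\mathcal R}=(\bm I-\bm B_{\mathcal R})^{-1}(\tilde{\bm\Lambda}\bm L'+\bm e_{\mathcal R})$. The sources are independent and non-Gaussian, so this is an LvLiNGAM whose observed DAG is the induced subgraph.

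\textbf{Main obstacle.} The hardest step will be Step 2: showing rigorously that a consistent cumulant-matched selection of $\alpha_{js}$ across all $j$ always corresponds to a single column that is either $e_s$ or a swappable latent, and never a mixture of choices. I would try to argue from the distinctness of generic cumulants, which forces a single common source column, together with the requirement that the column be shared with every $j\in\mathrm{Des}(X_s)$. That requirement should force $\mathrm{Des}(L_h)\supseteq \mathrm{Des}(X_s)\cup\{X_s\}$ and hence the swap condition generically.
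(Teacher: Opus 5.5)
Your overall route is the paper's. The paper's proof is just Lemma~\ref{lem: induced} plus Corollary~\ref{cor: invariant B}. The Lemma says that in the swapped model $\mathcal G^{(u_s)}$, $\bm\alpha_{\mathcal R,s}$ is the total-effect column of $X_s$, so substitution kills the $X_s$ term. The Corollary says $\bm B^{(u_s)}_{\mathcal R,\mathcal R}=\bm B_{\mathcal R,\mathcal R}$. The paper simply presupposes that the selected cumulant is $\kappa^{(k)}(u_s)$ for some $u_s\in\{e_s\}\cup\mathrm{Pa}(X_s)$ for which that swap makes sense.

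Where your plan goes wrong is the argument for your ``main obstacle''. Sharing a column with every $X_j\in\mathrm{Des}(X_s)$ yields only $\mathrm{Des}(L_h)\supseteq\mathrm{Des}(X_s)\cup\{X_s\}$, which is automatic for any latent parent of the source $X_s$. The swap condition of \citet{salehkaleybar2020learning} needs equality, and it can genuinely fail. Take $L_h\to X_s,X_j,X_k$ and $X_s\to X_j$ only. The group matched to $\kappa^{(k)}(L_h)$ gives $\alpha_{js}=b_{js}+\lambda_{jh}/\lambda_{sh}$, while the update sets $\alpha_{ks}=0$ because $X_k\notin\mathrm{Des}(X_s)$. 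No swap of $L_h$ and $e_s$ is available here. Moreover, $\bm\alpha_{\mathcal R,s}$ is not even the $X_s$-column of the column-exchanged mixing matrix, whose $X_k$-entry is $\lambda_{kh}/\lambda_{sh}\neq 0$. So your case ``$L_h$'' cannot be reduced to case ``$e_s$'' in the way you propose.

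You do not need that reduction: your Step~1 already proves the statement for every choice of $\bm\alpha_{\mathcal R,s}$. Because $X_s$ has no observed parents, $X_s=\sum_h\lambda_{sh}L_h+e_s$, hence
\[
\tilde X_j=\sum_{i\in\mathcal R}b_{ji}\tilde X_i+\sum_h\bigl(\lambda_{jh}+\delta_j\lambda_{sh}\bigr)L_h+\delta_j e_s+e_j ,
\]
and $\{L_h\}\cup\{e_s\}\cup\{e_j\}_{j\in\mathcal R}$ are mutually independent non-Gaussian sources. Treat $e_s$ as one more latent variable, and absorb any latent with fewer than two observed children into a disturbance. This gives an LvLiNGAM for $\tilde{\bm X}_{\mathcal R}$ whose observed coefficient matrix is exactly $\bm B_{\mathcal R,\mathcal R}$, i.e.\ the induced subgraph of $\mathcal G^O$ on $\bm X\setminus\{X_s\}$. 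In the example it gives $\tilde X_j=e_j-(\lambda_{jh}/\lambda_{sh})e_s$ and $\tilde X_k=\lambda_{kh}L_h+e_k$, with the required empty observed DAG.

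This direct argument is shorter than going through swaps. It needs neither genericity nor any analysis of which cumulant was matched, and it covers the algorithm's actual update (zeros for non-descendants) in cases like the example. Drop the case split; the cumulant matching matters only for the latent-to-observed part (Theorem~\ref{thm: unchanged OL}), not for this statement.
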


By Theorem~\ref{thm: unchanged}, after the update \eqref{eq: update},
$\tilde{\bm X}_{\mathcal R}$ admits an LvLiNGAM representation whose observed DAG coincides with the induced subgraph of $\mathcal G^O$ on $\bm X\setminus\{X_s\}$.
Although the above selection procedure may yield different vectors
$\bm{\alpha}_{\mathcal R,s}$ corresponding to different $k$-th order cumulants, all such choices induce the same observed DAG.

Let $\mathcal A_s$ denote the set of all vectors
$\bm{\alpha}_{\mathcal R,s}$ obtained by selecting different common $k$-th order cumulants in the above procedure. 
Although every vector in $\mathcal A_s$ induces the same observed DAG over the remaining variables, different choices may yield different latent-to-observed structures.
Since $\mathcal G$ is assumed to be the sparsest DAG in its observational equivalence class, the remaining task is to identify the vector in $\mathcal A_s$ that yields the sparsest latent-to-observed structure.
Specifically, for each
$\bm{\alpha}_{\mathcal R,s}\in\mathcal A_s$, define
\[
L(\bm{\alpha}_{\mathcal R,s})
=
\sum_{i,j\in\mathcal R}
\ell_{ij}(\bm{\alpha}_{\mathcal R,s}),
\]
where $\ell_{ij}(\bm{\alpha}_{\mathcal R,s})$ denotes the number of latent confounders between the residualized variables $\tilde X_i$ and $\tilde X_j$ identified by Proposition~\ref{thm: latent number}.
Since every vector in $\mathcal A_s$ induces the same observed DAG and the same number of latent variables, minimizing $L(\bm{\alpha}_{\mathcal R,s})$ is equivalent to selecting the sparsest latent-to-observed structure, and hence the sparsest DAG.
Accordingly, we select any minimizer
\begin{equation}
\label{eq: alpha star}
\bm{\alpha}_{\mathcal R,s}^{*}
\in
\operatorname*{arg\,min}_{\bm{\alpha}_{\mathcal R,s}\in\mathcal A_s}
L(\bm{\alpha}_{\mathcal R,s}).
\end{equation}
\begin{theorem}
\label{thm: unchanged OL}
Under the genericity assumption, the sparsest latent-to-observed structure over \(\tilde{\bm X}_{\mathcal R}\) obtained from \(\bm{\alpha}_{\mathcal R,s}^{*}\) coincides with the structure obtained from the induced subgraph of $\mathcal G^{OL}$ on $\bm{V} \setminus \{X_{s}\}$ by absorbing every latent variable having only one observed child into the disturbance of that child.
\end{theorem}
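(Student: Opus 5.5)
Write $\mathcal G^{OL}_{-s}$ for the target structure: the induced subgraph of $\mathcal G^{OL}$ on $\bm V\setminus\{X_s\}$, with every latent variable that has only one observed child absorbed into that child's disturbance. The plan has two parts. First, exhibit one choice in $\mathcal A_s$ whose residualized model has exactly the latent-to-observed structure $\mathcal G^{OL}_{-s}$. Second, show that every other choice produces, generically, a structure that is at least as dense, and strictly denser unless it coincides with $\mathcal G^{OL}_{-s}$.

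\textbf{The natural choice.} Let $\bm\alpha^{\circ}_{\mathcal R,s}$ be the vector obtained by selecting, for every $j$, the candidate associated with the $k$-th order cumulant of $e_s$. Then $\alpha^{\circ}_{js}=m^O_{js}$. Writing $X_s=\sum_{h\in\mathrm{Pa}(X_s)\cap\bm L}\lambda_{sh}L_h+e_s$, I compute
\[
\tilde X_j=X_j-m^O_{js}X_s
=\sum_{h}\bigl(m^{OL}_{jh}-m^O_{js}\lambda_{sh}\bigr)L_h+\sum_{i\neq s}m^O_{ji}e_i .
\]
The coefficient $m^{OL}_{jh}-m^O_{js}\lambda_{sh}$ is exactly the total effect of $L_h$ on $X_j$ along paths avoiding $X_s$. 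Hence $\tilde{\bm X}_{\mathcal R}$ is the LvLiNGAM obtained by deleting $X_s$ together with its incident edges. This gives the observed DAG of Theorem~\ref{thm: unchanged} and the induced latent-to-observed graph on $\bm V\setminus\{X_s\}$. A latent that was a parent of $X_s$ may now have a single observed child; merging it with that child's disturbance yields the canonical form $\mathcal G^{OL}_{-s}$. Genericity, via Proposition~\ref{thm: latent number}, then says that each $\ell_{ij}(\bm\alpha^\circ)$ equals $|\mathrm{Conf}(\tilde X_i,\tilde X_j)|$ computed in $\mathcal G^{OL}_{-s}$.

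\textbf{Any other choice.} Every other element of $\mathcal A_s$ selects, for all descendants simultaneously, the cumulant of some latent $L_h$ that is a common parent of $X_s$ and of the relevant descendants. This is precisely the swap situation of \citet{salehkaleybar2020learning} between $L_h$ and $e_s$. The subtracted coefficient is then $m^{OL}_{jh}/\lambda_{sh}$ rather than $m^O_{js}$. In the residual, $e_s$ is no longer removed: it enters every $\tilde X_j$ with $X_j\in\mathrm{Des}(X_s)\cup\mathrm{Ch}(L_h)$ with generically nonzero coefficient, so it behaves as a new latent confounder. Meanwhile $L_h$ is effectively removed from those variables.

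To turn this into a sparsity comparison, I would argue pairwise on the confounder counts. For each pair $(i,j)$, the set of latent sources affecting both $\tilde X_i$ and $\tilde X_j$ under the swapped choice is obtained from the natural one by replacing $L_h$ with $e_s$. Since $\mathrm{Des}(X_s)\subseteq\mathrm{Des}(L_h)$ in the relevant part, one of two things holds. If $\mathrm{Ch}(L_h)\setminus\{X_s\}$ coincides with the set of descendants of $X_s$, the counts agree; in that case the two structures are the same up to relabeling, so the result is unchanged. Otherwise, the descendants of $X_s$ that are not children of $L_h$ acquire an additional confounder, so $L$ strictly increases.

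I also need Lemma~\ref{lem: induced} and Corollary~\ref{cor: invariant B} to guarantee that $\ell_{ij}$ is generically well defined after residualization. The genericity assumption, with its measure-zero exclusion, ensures that no accidental cancellations reduce ranks in Proposition~\ref{thm: latent number}.

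\textbf{Main obstacle.} The hardest step is the combinatorial claim that a swapped choice never lowers $L$. In particular, I must rule out the case where removing $L_h$ saves more confounder pairs than $e_s$ adds. My first attempt would use the sparsest-DAG assumption on $\mathcal G$ itself. If a swap produced a strictly smaller pairwise-confounder total, performing the corresponding swap in the original model would give an observationally equivalent DAG with fewer edges, contradicting the assumption that $\mathcal G$ is sparsest. This requires checking that the confounder counts among $\bm X\setminus\{X_s\}$ translate monotonically into edge counts of the full DAG.
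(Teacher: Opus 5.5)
Your treatment of the natural choice is correct: it is Lemma~\ref{lem: induced} with $u_s=e_s$. The paper's proof is shorter. It lets $u_s$ be the variable whose cumulant is aligned with $\bm\alpha^{*}_{\mathcal R,s}$, reads off $\bm\Lambda^{(u_s)}_{\mathcal R,:}$ from \eqref{eq: tilde x}, and simply asserts that minimizing $L$ makes $\bm\Lambda^{(u_s)}$ the sparsest, hence equal to $\mathcal G^{OL}$. The comparison you flag as the main obstacle is therefore where all the content lies, and your version of it has two concrete problems.

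First, the swapped residual is computed incorrectly. Write $m^{OL}_{jh}=\lambda_{sh}m^O_{js}+a_{jh}$, where $a_{jh}$ is the effect of $L_h$ on $X_j$ along paths avoiding $X_s$. For $X_j\in\mathrm{Des}(X_s)$,
\[
X_j-\frac{m^{OL}_{jh}}{\lambda_{sh}}X_s=\bigl(X_j-m^{O}_{js}X_s\bigr)-\frac{a_{jh}}{\lambda_{sh}}X_s .
\]
Three things follow:
\begin{itemize}
\item $L_h$ drops out.
\item $e_s$ enters with coefficient $-a_{jh}/\lambda_{sh}$, i.e.\ exactly where $L_h$ bypassed $X_s$. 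It inherits $L_h$'s residual column rather than spreading over $\mathrm{Des}(X_s)\cup\mathrm{Ch}(L_h)$.
\item Every other latent parent $L_{h'}$ of $X_s$ gets the new coefficient $a_{jh'}-(\lambda_{sh'}/\lambda_{sh})a_{jh}$.
\end{itemize}
So the confounder sets are not obtained by relabeling $L_h$ as $e_s$, and any densification comes from the other latent parents of $X_s$. Your dichotomy is accordingly false. In Appendix~\ref{sec: example}, $X_3\in\mathrm{Des}(X_1)$ is not a child of $L_1$, yet the $L_1$-aligned choice still leaves exactly one confounder between $\tilde X_2$ and $\tilde X_3$.

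Second, the inequality you defer (a swapped choice never lowers $L$) fails for the procedure as specified. Because $\alpha_{js}=0$ for $X_j\notin\mathrm{Des}(X_s)$, an $L_h$-aligned choice strips $L_h$ from the descendants of $X_s$ but leaves it on its other children. Take $X_s=L_1+e_s$, $X_2=bX_s+\lambda_2L_1+e_2$, $X_3=\lambda_3L_1+e_3$. This $\mathcal G$ is the sparsest DAG in its class, and $X_3\notin\mathrm{Des}(X_s)$. The natural choice gives $\ell_{23}=1$. The choice $\alpha_{2s}=b+\lambda_2$ gives $\tilde X_2=e_2-\lambda_2e_s$ and $\tilde X_3=X_3$, which are independent, so $\ell_{23}=0$. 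The minimizer thus discards the confounder $L_1$ of $X_2,X_3$ that the target contains, so the statement as literally specified fails here.

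Your fallback cannot exclude this. The implied total-effect column $(1,b+\lambda_2,0)^{\top}$ of $X_s$ is not proportional to any column of the mixing matrix. Hence this residual does not arise from deleting $X_s$ in any representation of $P(\bm X)$, and no sparser equivalent DAG is produced. The swap of $L_h$ and $e_s$ is legitimate only when $\mathrm{Des}(L_h)=\mathrm{Des}(X_s)\cup\{X_s\}$. The paper's short argument has the same hole, since Lemma~\ref{lem: induced} subtracts the $u_s$-aligned total effect from every $X_j$, $j\in\mathcal R$, not only from descendants.

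A repaired argument has two options:
\begin{itemize}
\item restrict $\mathcal A_s$ to latents satisfying this condition, or
\item subtract $m^{OL}_{jh}/\lambda_{sh}$ from all of $\bm X_{\mathcal R}$, as in Lemma~\ref{lem: induced}.
\end{itemize}
In either case the swap replaces the column of $L_h$ by the $e_s$-column $-\bm\Lambda_{\mathcal R,h}/\lambda_{sh}$, and each $\bm\Lambda_{\mathcal R,h'}$ by $\bm\Lambda_{\mathcal R,h'}-(\lambda_{sh'}/\lambda_{sh})\bm\Lambda_{\mathcal R,h}$. Supports can therefore only grow. You would still have to show that every such growth strictly raises $L$, which monotonicity of $\mathrm{Conf}$ alone does not give.
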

Thus, the update \eqref{eq: update} becomes
\begin{align}
\label{eq: update *}
    \tilde{X}_{j} = X_{j} - \alpha^{*}_{js}X_{s}, \quad \forall X_{j} \in \bm{X}\setminus \{X_{s}\},
\end{align}
where $\bm{\alpha}^{*}_{\mathcal R,s}=(\alpha^{*}_{js})_{j\in\mathcal R}$ is selected by \eqref{eq: alpha star}.
If multiple such choices of $\bm{\alpha}^*_{\mathcal R,s}$ exist, one is selected at random.

At this iteration, the selected aligned group gives the total-effect column associated with the removed source in the selected sparsest representation.
The remaining aligned groups of candidate total effects, whose entries correspond to the same cumulant, are recorded as total-effect columns of latent variables in the mixing matrix.

By Theorems~\ref{thm: unchanged} and \ref{thm: unchanged OL}, updating the variables using
$\bm{\alpha}_{\mathcal R,s}^{*}$ yields an LvLiNGAM whose DAG coincides with the induced subgraph of $\mathcal G$ induced by removing $X_s$, where every latent variable having only one observed child is absorbed into the disturbance of that child. 
Applying this procedure recursively therefore recovers the entire sparsest DAG $\mathcal G$. 
This result is formalized in the following theorem.
\begin{theorem}
\label{thm: identify mixing matrix}
Under the genericity assumption, the proposed top-down procedure identifies all total-effect columns of the mixing matrix up to permutation consistent with the sparsest DAG over $\bm{V}$.
\end{theorem}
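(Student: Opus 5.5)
The plan is an induction on the number $p$ of observed variables, with Theorems~\ref{thm: unchanged} and \ref{thm: unchanged OL} as the inductive step. The statement to prove is slightly stronger than the theorem. For any canonical LvLiNGAM whose sparsest representative is $\mathcal G$, the procedure returns a collection of columns that coincides, up to permutation, with the columns of the mixing matrix $\bm M$ of $\mathcal G$. These are the columns of total effects of each $e_i$ and each $L_h$ on $\bm X$. For $p=1$ there are no confounders in the canonical model and the column is trivially $(1)$. For the step, fix an observed source $X_s$ found via Proposition~\ref{thm: latent number}.

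\emph{The columns emitted at this iteration.} First I would show that these columns are exactly the columns of $\bm M$ with a nonzero entry in row $s$. Since $X_s$ has no observed parents, we have $X_s=\sum_{h:L_h\to X_s}\lambda_{sh}L_h+e_s$, so these columns belong to $e_s$ and to the latent parents of $X_s$. Fix such a column. Its entry for a descendant $X_j$ of $X_s$ is a root returned by Proposition~\ref{thm: estimate b} for the pair $(X_s,X_j)$, because each latent parent of $X_s$ that also reaches $X_j$ is in $\mathrm{Conf}(X_s,X_j)$. By Proposition~\ref{lem: estimate e cumulant} and genericity, this root is tagged by the $k$-th cumulant $\kappa^{(k)}(e_s)$ or $\kappa^{(k)}(L_h)$, scaled by the normalization on $X_s$. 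The scaling is common across all $j$, so aligning roots by equal cumulant reproduces the column exactly. Non-descendants get the entry $0$, which is correct as well.

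The main obstacle is the swap ambiguity. When $\mathrm{Des}(L_h)=\mathrm{Des}(X_s)\cup\{X_s\}$, the group tagged by $e_s$ and the group tagged by $L_h$ can both serve as ``the source column''. Which one is chosen determines $\bm\alpha^*_{\mathcal R,s}$. This is exactly what the minimization \eqref{eq: alpha star} resolves. By Theorem~\ref{thm: unchanged OL}, the minimizer produces the latent-to-observed structure of the sparsest DAG restricted to $\bm V\setminus\{X_s\}$. I would then argue that any other choice gives a representation in the same equivalence class whose DAG has strictly more latent--observed incidences (as in Figure~\ref{fig: equivalence class of IV graph}). By the uniqueness of the sparsest DAG under genericity, the assignment of labels ``$e_s$'' versus ``$L_h$'' to the groups therefore agrees with $\mathcal G$. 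The remaining groups are recorded as latent columns, and together with the source column they form a set of columns of $\bm M$ labelled consistently with $\mathcal G$.

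\emph{Inductive step.} By Theorems~\ref{thm: unchanged} and \ref{thm: unchanged OL}, $\tilde{\bm X}_{\mathcal R}$ is a canonical LvLiNGAM. Its sparsest DAG is $\mathcal G$ with $X_s$ removed, and each latent variable left with a single observed child is absorbed into that child's disturbance. Its mixing matrix is obtained from $\bm M$ by deleting row $s$ and the emitted columns. This holds because $\tilde X_j=X_j-\alpha^*_{js}X_s$ removes exactly the contribution through $X_s$ of the chosen source term. The other columns, those of $e_i$ for $i\neq s$ and of latent variables not adjacent to $X_s$, are untouched. Absorbed latent variables merely merge with disturbance columns that are still present.

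Applying the induction hypothesis to $\tilde{\bm X}_{\mathcal R}$ recovers those remaining columns up to permutation. Adding the columns emitted for $X_s$ gives all columns of $\bm M$. Their ancestral order follows from the source ordering, so the columns are arranged consistently with the sparsest DAG over $\bm V$. Finally, I would note that the genericity exceptions arising across the finitely many iterations form a finite union of measure-zero sets, so the conclusion still holds generically.
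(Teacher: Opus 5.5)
Your route is the paper's: its proof of Theorem~\ref{thm: identify mixing matrix} just applies Theorems~\ref{thm: unchanged} and~\ref{thm: unchanged OL} recursively, and you organize this as an induction over sources. The gap is in the bookkeeping you add to close that induction. You claim that the mixing matrix of $\tilde{\bm X}_{\mathcal R}$ is $\bm M$ with row $s$ and all emitted columns deleted, i.e.\ that residualizing eliminates every latent parent of $X_s$. It does not. The update $\tilde X_j=X_j-\alpha^*_{js}X_s$ removes the chosen source term $u_s$ and the parts of other latent effects that pass through $X_s$, but it leaves the direct edges $L_h\to X_j$ in place. By Lemma~\ref{lem: induced}, the reduced model has latent coefficient matrix $\bm\Lambda^{(u_s)}_{\mathcal R,:}$. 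By Theorem~\ref{thm: unchanged OL}, which you invoke, every latent parent $L_h$ of $X_s$ with at least two observed children in $\mathcal R$ survives. Its new column is $(\bm I-\bm B_{\mathcal R,\mathcal R})^{-1}\bm\Lambda_{\mathcal R,h}=\bm m^{OL}_{\mathcal R,h}-\lambda_{sh}\bm m^{O}_{\mathcal R,s}$ (for $u_s=e_s$), which is not a column of $\bm M$.

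Concretely, take $X_1=L+e_1$, $X_2=\lambda_2L+b_{21}X_1+e_2$, $X_3=\lambda_3L+b_{32}X_2+e_3$, which is the sparsest member of its class. With $u_1=e_1$, step one emits the $e_1$-column and the $L$-column $(1,\,b_{21}+\lambda_2,\,\lambda_3+b_{32}(b_{21}+\lambda_2))^\top$. Yet $\tilde X_2=\lambda_2L+e_2$ and $\tilde X_3=b_{32}\tilde X_2+\lambda_3L+e_3$ still share $L$; with $u_1=L$, $e_1$ survives instead. The induction hypothesis therefore returns three columns, one of them proportional to $(\lambda_2,\,\lambda_3+b_{32}\lambda_2)^\top$. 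Your union then has five columns for an $\bm M$ with four. So ``adding the columns emitted for $X_s$ gives all columns of $\bm M$'' fails whenever a latent parent of a source has two or more further children.

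To repair this, you need an extra step that matches each latent surviving in $\tilde{\bm X}_{\mathcal R}$ with the group already emitted for it at step $s$, and counts it only once. For $u_s=e_s$, its full column is $\lambda_{sh}$ times the $e_s$-column plus its reduced-model column padded with $0$ in row $s$. Alternatively, you could record a latent column only when the latent leaves the model.

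The same example also breaks your treatment of the swap. Both aligned groups leave exactly one confounder of $(\tilde X_2,\tilde X_3)$, so \eqref{eq: alpha star} ties. Moreover, $\mathcal G$ and $\mathcal G^{(L)}$ have the same number of latent--observed edges. What makes $\mathcal G$ sparser is the extra edge $X_1\to X_3$ in $\mathcal G^{(L)}$, with coefficient $\lambda_3$. That edge is singled out only by the rule that $\bm\alpha$ be taken from triples satisfying \eqref{eq: total effects and edges generalized}. So ``strictly more latent--observed incidences'' is false, and that rule must enter your argument.

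Finally, two of your claims about the emitted columns are false as stated:
\begin{itemize}
\item The column of a latent parent of $X_s$ is generally nonzero at its children outside $\mathrm{Des}(X_s)$, so ``entry $0$ at non-descendants'' is right only for the source column.
\item A latent parent of $X_s$ all of whose directed paths to $X_j$ pass through $X_s$ is not in $\mathrm{Conf}(X_s,X_j)$. It is absorbed into $v_s$, so the root $m^O_{js}$ carries a cumulant such as $\kappa^{(k)}(e_s)+\lambda_{sh}^k\kappa^{(k)}(L_h)$. Equal-cumulant alignment alone then does not reproduce the column.
\end{itemize}
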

According to the selected combination of total effects in the update~\eqref{eq: update *}, the proposed method obtains the mixing matrix among the observed variables, namely $(\bm I-\bm B)^{-1}$, corresponding to the sparsest DAG over $\bm X$. 
Hence, $\bm B$ can be estimated from $(\bm I-\bm B)^{-1}$.
In finite samples, $\bm{B}$ can be pruned by enforcing consistency with the estimated parent-child relationships.
Moreover, the remaining total-effect columns corresponding to latent sources form $(\bm I-\bm B)^{-1}\bm{\Lambda}$, which ensures the sparsest latent-to-observed bipartite graph between $\bm{L}$ and $\bm{X}$.
Multiplying $(\bm{I}-\bm{B})$ by $(\bm{I}-\bm{B})^{-1}\bm{\Lambda}$ yields an estimate of $\bm{\Lambda}$, and thus identifies the directed edges from $\bm{L}$ to $\bm{X}$.
In finite samples, $\bm{\Lambda}$ can be pruned 
by setting entries whose absolute values are below a predefined threshold to zero.

The update \eqref{eq: update *} residualizes the descendants by removing the effects of the observed source. Higher-order cumulants are then recomputed from the residualized variables, and the procedure based on Propositions~\ref{thm: latent number}--\ref{thm: estimate b} is recursively applied to the residualized variables. 

In practice, higher-order cumulants are estimated from finite samples, and their estimation accuracy generally deteriorates as the order increases. ReLVLiNGAM updates the cumulants of descendant variables after subtracting the contributions of the disturbance and latent confounders associated with the observed source. Since this update explicitly relies on the estimated higher-order cumulants of these latent variables and disturbances, errors in those estimates may directly affect the updated cumulants used in subsequent total-effect estimation.


The proposed method also uses higher-order cumulants, but only to match candidate total effects across different variable pairs. Once the matching is completed, the update is performed by residualizing the observed variables themselves rather than by updating cumulants. Consequently, the update does not explicitly rely on the estimated higher-order cumulants of individual disturbances or latent confounders, which is expected to reduce the impact of their estimation errors on downstream inference. 

In addition, unlike ReLVLiNGAM, the proposed method does not rely on low-order cumulants to recursively estimate disturbance cumulants. 
Instead, it only requires an order $k$ for which \eqref{eq: system} is uniquely solvable. 
Specifically, the value of $k$ can be determined by increasing it from $k=2$ and choosing the smallest one for which the corresponding linear system in \eqref{eq: system} has full column rank.
Hence, the proposed method does not suffer from the local restriction imposed by ReLVLiNGAM.

The proposed procedure is summarized in Algorithm~\ref{alg: proposed}.
We also provide a detailed example illustrating the proposed algorithm on the models in Figure~\ref{fig: equivalence class of IV graph} in Appendix~\ref{sec: example}.

\begin{algorithm}[!t]
\DontPrintSemicolon
\SetAlgoInsideSkip{smallskip}
\SetAlCapSkip{0.3em}
\SetInd{0.25em}{0.55em}
{
\footnotesize
\hrule
\BlankLine
\caption{Proposed Method}
\label{alg: proposed}

\BlankLine
\hrule
\BlankLine

\Input{Observed data matrix $\bm{X}\in\mathbb{R}^{n\times p}$}
\Output{Estimated causal graph $\widehat{\mathcal{G}}$}

\textbf{Initialization}\;
$\widehat{\mathcal{G}}^{O}\gets (\bm{X},\emptyset)$,\;
$\widehat{\bm{M}}\gets \bm{I}_{p\times p}$,\;
$\widehat{\bm{M}}_{\mathrm{latent}}\gets \emptyset$\;

Identify ancestral relationships among $\bm{X}$ and observed sources $\bm{X}_{s}$ by Proposition~\ref{thm: latent number}\;

\While{$\bm{X}_{s}\neq \emptyset$}{
    $\bm{X}_{s,\mathrm{next}}\gets \emptyset$\;

    \ForEach{$X_{s}\in \bm{X}_{s}$}{
        Identify $\widetilde{\mathrm{Ch}}(X_{s})$, $\bm{X}_{{\mathrm{open}}}$, and $\bm{X}_{{\mathrm{closed}}}$\;

        Compute all possible total effects from $X_s$ and the corresponding disturbance cumulants by Propositions~\ref{thm: estimate b} and~\ref{lem: estimate e cumulant} 
        \;


        \While{$\bm{X}_{{\mathrm{open}}}\neq \emptyset$}{
            Compute total effects from $\widetilde{\mathrm{Ch}}(X_s)$ to $\bm{X}_{{\mathrm{open}}}$ after regressing out $X_s$ under each possible total effect of $X_s$ by Proposition~\ref{thm: estimate b}\;

            Determine the parent--child relationships between $X_s$ and the nodes in $\bm{X}_{\mathrm{\mathrm{open}}}$ by Theorem~\ref{cor: total effects and edges}\;

            Update $\widetilde{\mathrm{Ch}}(X_s)$, $\bm{X}_{{\mathrm{open}}}$, and $\bm{X}_{{\mathrm{closed}}}$\;
        }

        Add all identified edges $X_s\rightarrow X_j$ for $X_j\in \widetilde{\mathrm{Ch}}(X_s)$ into $\widehat{\mathcal{G}}^{O}$\;

        
        Remove the effects of $X_s$ from $\mathrm{Des}(X_s)$ using the update~\eqref{eq: update *} with the selected total effects\;



        Replace the corresponding entries in $\widehat{\bm{M}}$ with all selected total effects in \eqref{eq: update *}\;

        Append other possible total effects not used in Equation~\eqref{eq: update *} to $\widehat{\bm{M}}_{\mathrm{latent}}$\;

        Identify newly emerging observed sources according to the ancestral relationships by Proposition~\ref{thm: latent number} and add them to $\bm{X}_{s,\mathrm{next}}$\;
    }

    $\bm{X}_{s}\leftarrow \bm{X}_{s,\mathrm{next}}$\;
}

$\widehat{\bm{B}} = \bm{I} - \widehat{\bm{M}}^{-1}$, $\widehat{\bm{\Lambda}} = \widehat{\bm{M}}^{-1}\widehat{\bm{M}}_{\mathrm{latent}}$, and construct $\widehat{\mathcal{G}}^{OL}$ from $\widehat{\Lambda}$\;

\Return{$\widehat{\mathcal{G}} = \widehat{\mathcal{G}}^{O}\cup \widehat{\mathcal{G}}^{OL}$}\;}
\BlankLine
\hrule
\end{algorithm}

\section{Simulations}
\label{sec: simulation}
\begin{figure}[h]
    \centering
    \subfigure[\centering case I]{
    \includegraphics[height = 0.149\textwidth]{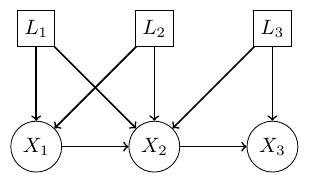}
    }
    \subfigure[\centering case II]{
    \includegraphics[height = 0.149\textwidth]{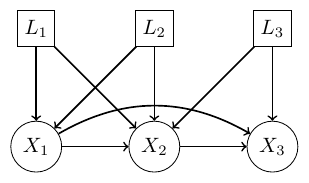}
    }
    \subfigure[\centering case III]{
    \includegraphics[height = 0.149\textwidth]{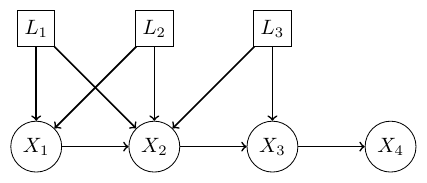}
    }
    \caption{Three models used for simulations.}
    \label{fig: simulation}
\end{figure}
This section reports simulation results\footnote{Code is available at \url{https://anonymous.4open.science/r/Test_20260701}} on the three causal DAGs in Figure~\ref{fig: simulation}, which are not handled by the original ReLVLiNGAM. 
We compare the proposed method with ReLVLiNGAM \citep{schkoda2024causal} to examine whether it overcomes ReLVLiNGAM’s local restriction.
To evaluate the proposed method under an oracle setting, we also report results for a variant of the proposed method that is provided with the true ancestral relationships and the true number of latent confounders. 
The local restriction in the original ReLVLiNGAM arises because cumulant updates rely on low-order cumulants. When $k < \ell + 1$, the system \eqref{eq: system} becomes underdetermined and is no longer uniquely solvable. The local restriction is imposed to avoid this situation. 
This restriction can be removed by simply using sufficiently high-order cumulants when solving \eqref{eq: system}. For completeness, Appendix~\ref{sec: simple improvement} describes a simple modification of ReLVLiNGAM that replaces low-order cumulants with sufficiently high-order ones when solving \eqref{eq: system}. We also include this modified version in the experimental comparison.

\subsection{Settings}
\label{sec: settings}

All disturbances and latent variables are sampled from $\mathrm{Lognormal}(-1.1,0.8)$, and then centered to have zero mean. 
To avoid identical distributions among the disturbances and latent variables, consistent with the genericity assumption, each variable is further multiplied by an independent scale sampled from $\mathrm{Uniform}(0.9,1.1)$.
For each latent variable $L_i$, the coefficient from $L_i$ to its observed child with the highest causal order is fixed to one. 
All other coefficients in $\bm{\Lambda}$ and $\bm{B}$ are independently drawn from $\mathrm{Uniform}(0.5, 0.8)$.
Since all causal coefficients are positive, the effects along different paths cannot cancel each other out. 
Moreover, the selected distributions have nonzero higher-order cumulants.
The sample size $N$ is set to 1K, 10K, 100K, and 1M, and each experiment is repeated $50$ times.
We evaluate the performance of the methods using the following metrics: 
\begin{itemize}[topsep=0pt]
\setlength{\parskip}{0cm}
\setlength{\itemsep}{0cm}
    \item $\mathrm{N}_{\mathrm{tol}}$ and $\mathrm{N}_{\mathrm{obs}}$: the number of runs in which the DAGs of $\mathcal{G}$ and $\mathcal{G}^O$ are correctly recovered, respectively (see Figure~\ref{fig: result N});
    \item $\mathrm{PRE}$, $\mathrm{REC}$, and $\mathrm{F1}$: the average precision, recall, and F1-score of the estimated edges of $\mathcal{G}$ (see Figure~\ref{fig: result pre}).
\end{itemize}

In the proposed method, we consider \eqref{eq: total effects and edges generalized} in 
Theorem~\ref{cor: total effects and edges} to hold in finite-sample settings if $\vert\alpha_{js}-\bm{\beta}_{\mathcal{I},s}^{\top}\bm{\gamma}_{j,\mathcal{I}.s}\vert < \tau_0 \in \{0.2,0.15,0.125,0.1\}$ for increasing sample sizes.
We further prune an edge from $\bm{L}$ to $\bm{X}$ if its absolute coefficient is below $\tau_{0, L}=0.3$. 
For the other methods, we also apply $\tau_{0}$ and $\tau_{0 ,L}$ to the estimated coefficient matrices to prune edges.
For the original and modified ReLVLiNGAM methods, we enumerate all candidate DAGs from the estimated mixing matrices, prune edges in the estimated coefficient matrices using $\tau_{0}$ and $\tau_{0,L}$, and select the sparsest DAG. 

Although both the proposed method and ReLVLiNGAM employ Proposition~\ref{thm: latent number}, they differ in how the rank of $A^{(k_{1}, k_{2})}_{{j}, {i}}$ is determined.
Let $\sigma_{r}$ be the $r$-th largest singular value of $A^{(k_{1}, k_{2})}_{{j}, {i}}$ and let $\tau_{s}$ and $\tau_{cs}$ be two predefined thresholds.
ReLVLiNGAM treats $\sigma_{r}$ as zero if $\sigma_{r}/\sigma_{1} \le \tau_{s}$.
In contrast, the proposed method sets $\sigma_{r}=0$ if $1-\sum_{i \in [r]} \sigma_{i}/ \sum_{i \in [d]} \sigma_{i} \le \tau_{cs}$, where $d$ is the number of singular values of $A^{(k_{1}, k_{2})}_{{j}, {i}}$ and is defined as in Proposition \ref{thm: latent number}.
Following \citet{schkoda2024causal}, we also impose an upper bound $\ell_{\mathrm{highest}}$ on the number of latent variables, with $\ell_{\mathrm{highest}}=2$ in cases~I and III, and $\ell_{\mathrm{highest}}=4$ in case~II. 
We set $\tau_s=0.008(i-1)/N^{0.125}$ for the original ReLVLiNGAM and $\tau_{cs}=0.002+0.0005(i-1)$ for our method, where $i$ denotes the depth from the observed source to reflect increasing estimation error with depth.
The settings of the modified ReLVLiNGAM follow those of the proposed method.
To estimate the cumulants of the disturbances and latent variables, both the proposed method and the modified ReLVLiNGAM increase $k$ from $k=2$ until the system in \eqref{eq: system} is uniquely solvable, and then use the resulting value of $k$ in Proposition~\ref{lem: estimate e cumulant}.
\begin{figure}[t]
    \centering
    \includegraphics[width = 0.95\textwidth]{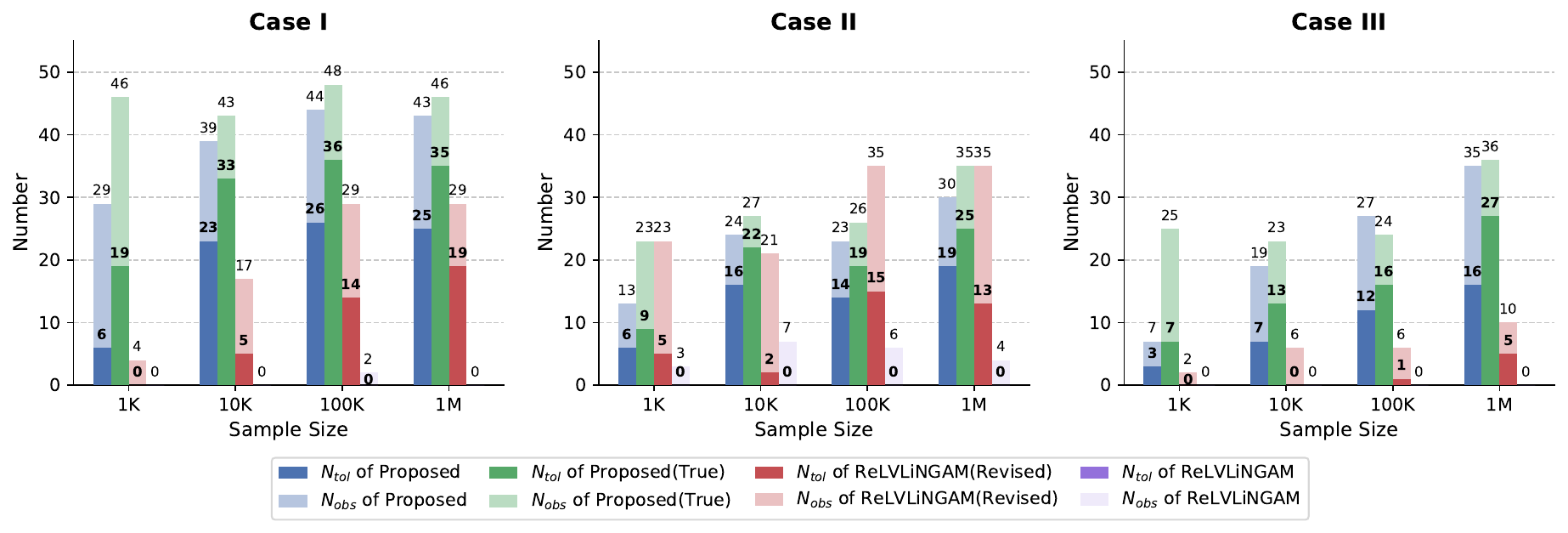}
    \caption{The performances in $\mathrm{N}_{\mathrm{tol}}$ and $\mathrm{N}_{\mathrm{obs}}$.}
    \label{fig: result N}
\end{figure}
\begin{figure}[t]
    \centering
    \includegraphics[width = 0.95\textwidth]{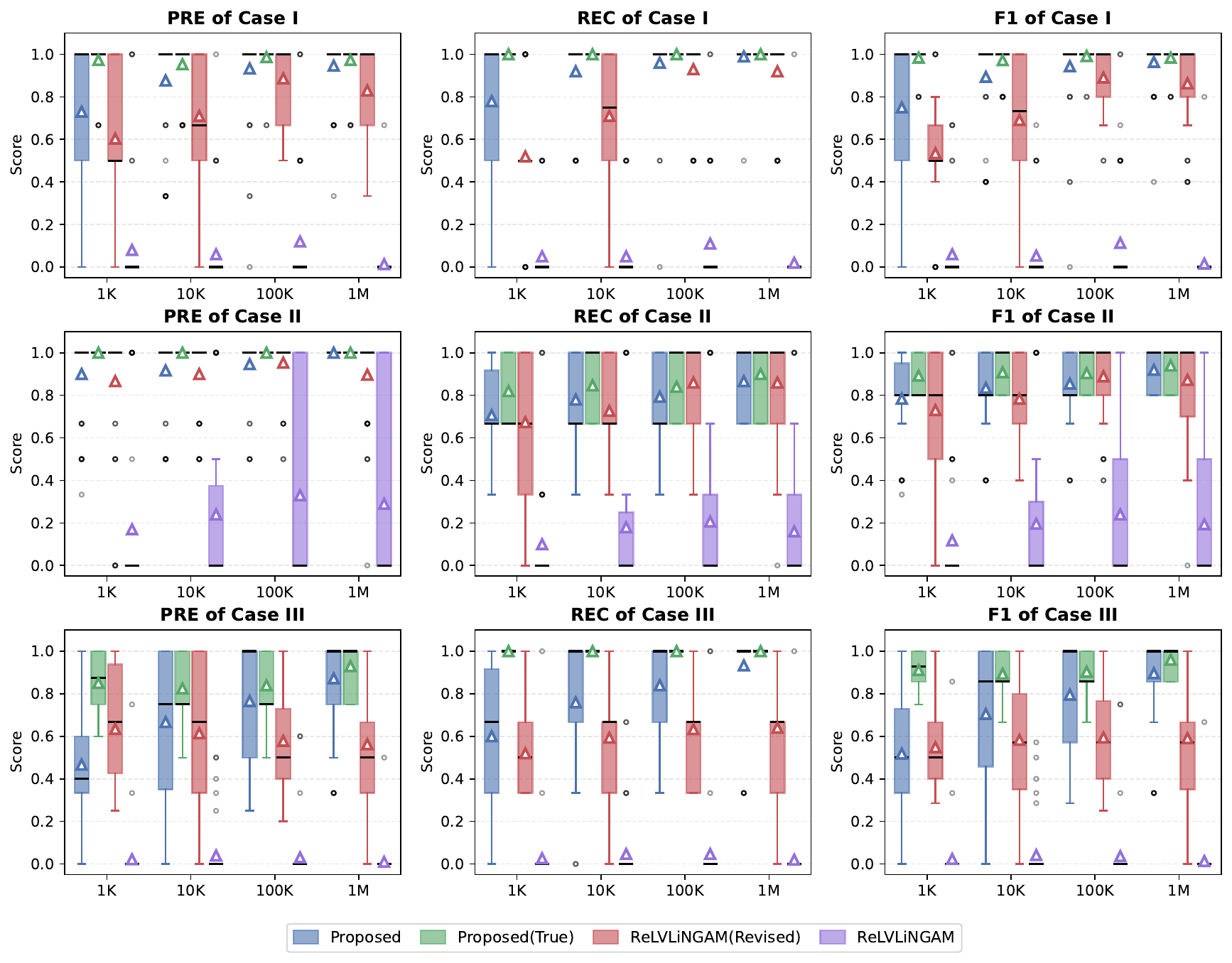}
    \caption{The performances of PRE, REC, and F1-score.}
    \label{fig: result pre}
\end{figure}
\subsection{Discussion}
\label{sec: discussion}
Figure~\ref{fig: result N} shows that the proposed method outperforms the original ReLVLiNGAM on all evaluation metrics across all settings. Since the DAGs in Figure~\ref{fig: simulation} do not satisfy the local restriction, the original ReLVLiNGAM fails to recover the correct DAG even as the sample size increases. In contrast, the estimation accuracy of the proposed method improves steadily with increasing sample size.

For reference, we also report the performance of the oracle version of the proposed method. Compared with the standard version, the oracle version achieves substantially higher accuracy when the DAG is sparse or the sample size is small. This result suggests that the accuracy of estimating ancestral relationships and the number of pairwise confounders has a non-negligible impact on the accuracy of DAG recovery.

The boxplots of PRE, REC, and F1 score in Figure~\ref{fig: result pre} show that, for the proposed method, both the mean (triangles) and the median (horizontal bars) of these metrics increase as $N$ increases,
indicating increasingly accurate recovery of parent--child relationships over $\bm{X}$.

The modified ReLVLiNGAM achieves performance comparable to that of the proposed method when the sample size is large in case II, but performs worse in sparse settings (cases I and III), likely because it lacks an effective edge-pruning strategy from finite samples. 
It is generally inferior to the proposed method in recovering the $\bm{L}\!\to\!\bm{X}$ edges, except in case II with 100K samples. 
This may be because errors in estimating higher-order disturbance cumulants can propagate to subsequent cumulant computations.
In contrast, the proposed method uses estimated disturbance cumulants only to match candidate total effects with the same associated cumulant. 
Since the cumulants are not recursively updated and propagated to subsequent iterations, the effect of cumulant estimation errors is expected to be less severe. 


\section{Real Data}
\label{sec: real-world}
We further evaluate the practical usefulness of the proposed method by applying it, together with ParceLiNGAM \citep{tashiro2014parcelingam}, RCD \citep{Maeda2020}, and the original and modified versions of ReLVLiNGAM \citep{schkoda2024causal}, to the Sachs protein dataset from \cite{Sachs2005}.
\begin{figure*}[t]
    \centering
    \subfigure[\mbox{Reference DAG}]{
        \includegraphics[width=0.22\textwidth, page=1]{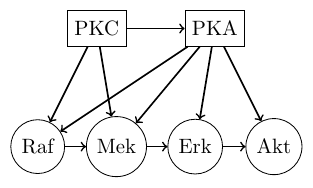}
    }
    \subfigure[\centering Canonical model]{
        \includegraphics[width=0.22\textwidth, page=2]{sachs_result_1.pdf}
    }
    \subfigure[\centering ParceLiNGAM]{
    \includegraphics[width=0.22\textwidth, page=3]{sachs_result_1.pdf}
    }
    \subfigure[\centering RCD]{
    \includegraphics[width=0.22\textwidth, page=4]{sachs_result_1.pdf}
    }
    \subfigure[\centering ReLL]{
    \includegraphics[width=0.22\textwidth, page=5]{sachs_result_1.pdf}
    }
    \subfigure[\mbox{modified ReLL}]{
    \includegraphics[width=0.22\textwidth, page=6]{sachs_result_1.pdf}
    }
    \subfigure[\centering Proposed method]{
    \includegraphics[width=0.22\textwidth, page=7]{sachs_result_1.pdf}
    }
    \caption{The results of different methods applied to the Sachs dataset.}
    \label{fig: application}
\end{figure*}
The Sachs dataset \citep{Sachs2005} records expression levels of phosphorylated proteins and phospholipids in human immune cells and contains 11 variables and 7,467 samples. The dataset is accompanied by a reference signaling network constructed from biological knowledge. Based on this reference network, we treat PKC and PKA as latent confounders and use Raf, Mek, Erk, and Akt as observed variables. The DAG for the model is shown in Figure~\ref{fig: application}(a).
Although the DAG in Figure~\ref{fig: application}(a) is not a canonical LvLiNGAM, it can be transformed into its canonical form in Figure~\ref{fig: application}(b) without changing the observed structure by applying Algorithm~A of \cite{hoyer2008estimation}.

The thresholds $\tau_s$ and $\tau_{cs}$ for the proposed method and ReLVLiNGAMs are set in the same manner as in the numerical experiments in Section~\ref{sec: simulation}. 
The upper bound $\ell_{\mathrm{highest}}$ is set to $2$ for both methods.
Here, instead of employing a hard threshold, we verify~\eqref{eq: total effects and edges generalized} in Theorem~\ref{cor: total effects and edges} using 99\% bootstrap confidence intervals constructed from 400 bootstrap resamples.
ParceLiNGAM and RCD use the Hilbert--Schmidt independence criterion (HSIC; \citealp{Gretton2007}) to infer causal directions among observed variables. 
RCD sets the HSIC significance level to 0.01. 
ParceLiNGAM additionally applies Fisher’s method to combine HSIC p-values and uses a significance level of 0.1 for Fisher’s test. 
RCD also uses the Pearson test and the Shapiro-Wilk test, both at the 0.01 significance level.
For the original ReLVLiNGAM, since it outputs a very dense $\bm{L}\!\to\!\bm{X}$ structure, we prune latent-to-observed edges whose absolute coefficients are below $0.1$.

Figures~\ref{fig: application}(c)--(g) show the DAGs estimated by each method. In (e) and (f), ``ReLL'' denotes ReLVLiNGAM. As can be seen from these figures, 
the proposed method recovers an observed DAG that is closer to that in Figure~\ref{fig: application}(a) with only one extra edge, $\mathrm{Mek}\to \mathrm{Akt}$, although it estimates a denser $\bm{L}\!\to\!\bm{X}$ structure. 
ParceLiNGAM fails to identify the existence of latent confounders, and outputs redundant edges ${\text{Raf}\to \text{Erk}, \text{Raf}\to \text{Akt},\text{Mek}\to \text{Akt}}$.
RCD correctly concludes that there is no edge between $\text{Raf}$ and $\text{Erk}$, but does not identify any directed edges among the observed variables. 
Even with the pruning, the original ReLVLiNGAM yields an incorrect causal order over $\bm{X}$ and a dense $\bm{L}\!\to\!\bm{X}$ structure.
The modified ReLVLiNGAM yields an $\bm{L}\!\to\!\bm{X}$ structure closer to the reference graph than the original ReLVLiNGAM, but also fails to recover the true ancestral relationships. 

Overall, the proposed method recovers the observed DAG with only one extra edge, $\mathrm{Mek}\to \mathrm{Akt}$. 
Although the original ReLVLiNGAM, the modified ReLVLiNGAM, and the proposed method all infer multiple latent variables, the proposed method and the modified ReLVLiNGAM produce the sparsest $\bm{L}\!\to\!\bm{X}$ structure and most closely match the structure from latent variables to observed variables in Figure~\ref{fig: application}(a).

\section{Conclusion}
\label{sec: conclusion}
In this paper, we proposed a method for recovering the sparsest causal DAG within the observational equivalence class from finite samples. Unlike the original ReLVLiNGAM, the proposed method does not require the local restriction and is applicable to general canonical LvLiNGAMs. Although ReLVLiNGAM consistently estimates the mixing matrix, recovering the sparsest DAG asymptotically still requires an appropriate permutation of its columns. In contrast, the proposed method consistently estimates the sparsest DAG. 

The simulation results and the application to the Sachs protein data demonstrate the superiority of the proposed method over ReLVLiNGAM. Although the modified ReLVLiNGAM can also recover the sparsest causal DAG from finite samples without requiring the local restriction, the proposed method exhibits better finite-sample performance.

Several limitations remain. Since the proposed method relies on the estimation of higher-order cumulants, its performance can deteriorate when the sample size is small or the data are noisy, which may degrade the accuracy of DAG estimation. In addition, the computational cost can still be high when multiple candidate total effects must be examined, although this issue might not be severe when the DAG is sparse.

Improving the accuracy and computational efficiency of the method is an important direction for future work.

\acks{
This work was supported by JST SPRING under Grant Number JPMJSP2110 and JSPS KAKENHI under Grant Numbers 25K15017.
}


\bibliography{acml25}

\appendix
\section{Preservation of the LvLiNGAM Structure under Source Removal}
\label{sec: some thm}
Assume that $X_{s}$ is an observed source of $\mathcal G$.
Let $\mathcal R=[p]\setminus\{s\}$ and fix $u_s\in \{e_s\}\cup \mathrm{Pa}(X_s)$. 
As discussed in Section~\ref{sec: cumulants}, the corresponding DAG might change when swapping $u_{s}$ and $e_{s}$.
Let $\mathcal{G}^{(u_s)}$ denote the DAG in the observational equivalence class of $\mathcal{G}$ that is obtained by swapping $e_s$ and $u_s$. 
For each $j \in \mathcal R$, choose a root $\alpha^{(u_{s})}_{js}$ returned by Proposition~\ref{thm: estimate b} whose associated $k$-th order cumulant is $\kappa^{(k)}(u_{s})$. Then the update rule \eqref{eq: update_vec} is expressed as 
\begin{align*}
    \tilde{\bm X}^{(u_{s})}_{\mathcal R}
    =
    \bm X_{\mathcal R}
    -
    \bm\alpha^{(u_{s})}_{\mathcal R,s}X_s,
    \qquad
    \bm\alpha^{(u_{s})}_{\mathcal R,s}
    =
    \left(
    \alpha^{(u_{s})}_{js}
    \right)_{j\in\mathcal R}.
\end{align*}
\begin{lemma}
\label{lem: induced}
    Under the genericity assumption, $\tilde{\bm X}^{(u_{s})}_{\mathcal R}$ admits an LvLiNGAM representation whose DAG is the induced subgraph of $\mathcal G^{(u_{s})}$ obtained by deleting $X_s$ and all edges linked to $X_s$. 
\end{lemma}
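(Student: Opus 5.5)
We will work directly with the mixing-matrix representation \eqref{eq: LvLiNGAM 2} of the swapped model $\mathcal G^{(u_s)}$. By \citet{salehkaleybar2020learning}, swapping $u_s$ and $e_s$ gives another canonical LvLiNGAM with the same $P(\bm V)$. Its DAG $\mathcal G^{(u_s)}$ has coefficient matrices $\bm B^{(u_s)}$ and $\bm\Lambda^{(u_s)}$. In it, $X_s$ is still an observed source, and its own disturbance is now $u_s$ (rescaled appropriately). So we may assume without loss of generality that we are in the unswapped case $u_s=e_s$ for the DAG $\mathcal G^{(u_s)}$, and write
\[
X_s=\sum_{h\in \mathrm{Pa}_L(X_s)} \lambda_{sh}L_h + e_s,\qquad X_j=\sum_h m^{OL}_{jh}L_h+\sum_{i} m^{O}_{ji}e_i\quad (j\in\mathcal R).
\]
Here all quantities refer to $\mathcal G^{(u_s)}$.

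\textbf{Step 1 (identify the roots).} Apply Proposition~\ref{thm: estimate b} and Proposition~\ref{lem: estimate e cumulant} to each pair $(X_s,X_j)$ with $X_j\in\mathrm{Des}(X_s)$. In the pairwise canonical model \eqref{eq: two variables} the disturbance $v_s$ of $X_s$ is $e_s$, because $X_s$ has no observed parents and its latent parents that also reach $X_j$ become the confounders $L'_h$. Its associated cumulant is therefore $\kappa^{(k)}(e_s)$. Under genericity, distinct sources have distinct $k$-th cumulants, so the root matched to $\kappa^{(k)}(u_s)=\kappa^{(k)}(e_s)$ is exactly $\alpha^{(u_s)}_{js}=m^O_{js}$ in $\mathcal G^{(u_s)}$. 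For $X_j\notin\mathrm{Des}(X_s)$ we set $\alpha_{js}=0=m^O_{js}$.

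\textbf{Step 2 (residualize).} Compute
\[
\tilde X_j = X_j - m^O_{js}X_s=\sum_h \bigl(m^{OL}_{jh}-m^O_{js}\lambda_{sh}\bigr)L_h+\sum_{i\neq s} m^O_{ji}e_i .
\]
The $e_s$ term cancels. Next, decompose the total effects by last passage through $X_s$. Since $X_s$ is a source, every directed path from $L_h$ to $X_j$ either avoids $X_s$ or is of the form $L_h\to X_s\leadsto X_j$. Hence $m^{OL}_{jh}-m^O_{js}\lambda_{sh}$ is the sum over paths avoiding $X_s$. Likewise, $m^O_{ji}$ for $i\neq s$ involves only paths avoiding $X_s$, because no path from $X_i$ passes through the source $X_s$. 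These are exactly the total effects in the graph obtained by deleting $X_s$ and its incident edges.

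Consequently, with $\bm B'$ the restriction of $\bm B^{(u_s)}$ to $\mathcal R$ and $\bm\Lambda'$ the restriction of $\bm\Lambda^{(u_s)}$ to rows $\mathcal R$, we obtain $\tilde{\bm X}_{\mathcal R}=\bm\Lambda'\bm L+\bm B'\tilde{\bm X}_{\mathcal R}+\bm e_{\mathcal R}$. To verify this, note that $(\bm I-\bm B')^{-1}$ equals the $\mathcal R$-block of $(\bm I-\bm B^{(u_s)})^{-1}$, since the removed row and column form a source. The independence of $(\bm L,\bm e_{\mathcal R})$ is inherited, so this is an LvLiNGAM whose DAG is the induced subgraph.

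\textbf{Main obstacle.} The main obstacle is Step 1: justifying that the matched root equals $m^O_{js}$ in $\mathcal G^{(u_s)}$ for every $j$ simultaneously. This requires that the pairwise reduction via Algorithm~A keeps $e_s$ as the disturbance $v_s$, rather than merging it with some latent. We handle this by observing that $e_s$ affects only $X_s$ and its descendants and never confounds, so it stays as the disturbance of $X_s$.

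A second point to check is genericity. After the swap, $\kappa^{(k)}(e_s)$ must still differ from the cumulants of the other latent variables. This holds off a measure-zero set, since the swap merely relabels the sources.

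Latents that lose all but one observed child are still allowed by the claim, which speaks only of an LvLiNGAM representation rather than a canonical one.
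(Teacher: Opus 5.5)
Your proof is correct in its main computation, but Step 2 takes a different route from the paper. The paper stays with structural equations. It restricts $\bm X=\bm\Lambda^{(u_s)}\bm L^{(u_s)}+\bm B^{(u_s)}\bm X+\bm e^{(u_s)}$ to the rows in $\mathcal R$ and substitutes $\bm X_{\mathcal R}=\tilde{\bm X}^{(u_s)}_{\mathcal R}+\bm\alpha^{(u_s)}_{\mathcal R,s}X_s$. The coefficient of $X_s$ then vanishes by the total-effect recursion $\bm\alpha^{(u_s)}_{\mathcal R,s}=\bm b^{(u_s)}_{\mathcal R,s}+\bm B^{(u_s)}_{\mathcal R,\mathcal R}\bm\alpha^{(u_s)}_{\mathcal R,s}$.

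You work in reduced form instead. You cancel $e_s$ explicitly and use the source property to write every path through $X_s$ as $L_h\to X_s\to\cdots\to X_j$. You then recognize the surviving coefficients as the path sums $(\bm I-\bm B')^{-1}\bm\Lambda'$ and $(\bm I-\bm B')^{-1}$ of the induced subgraph, and return to structural form through the block-triangular inverse.

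The paper's version is shorter, and its algebra does not even use that $X_s$ is a source. The recursion holds for any node; source-ness matters only for identifying $\alpha^{(u_s)}_{js}$ from cumulants. Your version uses source-ness twice, but in exchange it exhibits the mixing matrix of the residualized variables directly. That matrix is what the algorithm assembles and what Corollary~\ref{cor: invariant B} manipulates. The paper has no counterpart to your Step 1: it simply takes $\alpha^{(u_s)}_{js}$ to be the total effect of $u_s$ on $X_j$.

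Your Step 1 justification, however, rests on a false claim: the pairwise disturbance $v_s$ need not equal $e_s$. A latent parent $L_h$ of $X_s$ whose every directed path to $X_j$ passes through $X_s$ is not in $\mathrm{Conf}(X_s,X_j)$, so the pairwise reduction absorbs it into $v_s$. For example, take $L_1\to X_1$, $L_1\to X_2$, $X_1\to X_2$, $X_1\to X_3$ and the pair $(X_1,X_3)$. This gives $\ell=0$ and $v_1=e_1+L_1$, so the cumulant attached to the root $m^O_{31}$ is $\kappa^{(k)}(v_1)\neq\kappa^{(k)}(e_1)$. Your ``main obstacle'' paragraph misses exactly this: $e_s$ never confounds, but it need not be alone in $v_s$.

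The conclusion you need still holds, for two reasons. First, the root attached to $v_s$ equals $m^O_{js}$ of $\mathcal G^{(u_s)}$ regardless of which non-confounding latent parents $v_s$ absorbs, because each such parent contributes $\lambda_{sh}m^O_{js}L_h$ to $X_j$. Second, the pairwise confounders are latent parents of $X_s$ distinct from $u_s$. So if, as the lemma presupposes, a root with associated cumulant $\kappa^{(k)}(u_s)$ has been chosen, genericity forces it to be that root. Replace the claim $v_s=e_s$ with this argument; Step 2 is unaffected.
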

\begin{proof}
Denote the LvLiNGAM representation associated with $\mathcal G^{(u_{s})}$ by
\begin{equation}
    \label{eq: model_us}
    \bm X
    =
    \bm\Lambda^{(u_{s})}\bm L^{(u_{s})}
    +
    \bm B^{(u_{s})}\bm X
    +
    \bm e^{(u_{s})}.
\end{equation}
$\alpha^{(u_s)}_{js}$ is the total effect from $u_s$ to $X_j$. 
Let $\bm{B}^{(u_s)}_{\mathcal R,\mathcal R}$ denote the submatrix of $\bm{B}^{(u_s)}$ with rows and columns indexed by $\mathcal R$.
Let $\bm{b}_{\mathcal R,s}$ denote the column of $\bm{B}^{(u_s)}_{\mathcal R,\mathcal R}$ corresponding to $X_s$.
By the definition of total effects,
\begin{align}
    \bm\alpha^{(u_{s})}_{\mathcal R,s}
    =
    \bm b^{(u_{s})}_{\mathcal R,s}
    +
    \bm B^{(u_{s})}_{\mathcal R,\mathcal R}
    \bm\alpha^{(u_{s})}_{\mathcal R,s}.
    \label{eq: alpha-total-effect-identity}
\end{align}
Let $\bm{\Lambda}^{(u_s)}_{\mathcal R,:}$ denote the submatrix of $\bm{\Lambda}^{(u_s)}$ consisting of the rows indexed by $\mathcal R$ and let $\bm{e}_{\mathcal R}$ denote the subvector of $\bm{e}^{(u_s)}$ consisting of the entries indexed by $\mathcal R$.
Restricting the structural equations to $\mathcal R$, we have
\[
    \bm X_{\mathcal R}
    =
    \bm b^{(u_{s})}_{\mathcal R,s}X_s
    +
    \bm B^{(u_{s})}_{\mathcal R,\mathcal R}\bm X_{\mathcal R}
    +
    \bm\Lambda^{(u_{s})}_{\mathcal R,:}\bm L^{(u_{s})}
    +
    \bm e^{(u_{s})}_{\mathcal R}.
\]
Since
$\bm X_{\mathcal R}=
\tilde{\bm X}^{(u_{s})}_{\mathcal R}+\bm\alpha^{(u_{s})}_{\mathcal R,s}X_s$, 
we obtain 
\begin{align*}
    \tilde{\bm X}^{(u_{s})}_{\mathcal R}
    &=-\bm\alpha^{(u_{s})}_{\mathcal R,s}X_s + \bm b^{(u_{s})}_{\mathcal R,s}X_s +\bm B^{(u_{s})}_{\mathcal R,\mathcal R}(\tilde{\bm X}^{(u_{s})}_{\mathcal R}+\bm\alpha^{(u_{s})}_{\mathcal R,s}X_s)+
    \bm\Lambda^{(u_{s})}_{\mathcal R,:}\bm L^{(u_{s})}
    +
    \bm e^{(u_{s})}_{\mathcal R}\\
    &=
    \bm B^{(u_{s})}_{\mathcal R,\mathcal R}
    \tilde{\bm X}^{(u_{s})}_{\mathcal R}
    +
    \left(
        \bm b^{(u_{s})}_{\mathcal R,s}
        +
        \bm B^{(u_{s})}_{\mathcal R,\mathcal R}
        \bm\alpha^{(u_{s})}_{\mathcal R,s}
        -
        \bm\alpha^{(u_{s})}_{\mathcal R,s}
    \right)X_s 
    +
    \bm\Lambda^{(u_{s})}_{\mathcal R,:}\bm L^{(u_{s})}
    +
    \bm e^{(u_{s})}_{\mathcal R}.
\end{align*}
By \eqref{eq: alpha-total-effect-identity}, the coefficient of $X_s$ is zero. Thus,
\begin{equation}
    \label{eq: tilde x}
    \tilde{\bm X}^{(u_{s})}_{\mathcal R}
    =
    \bm B^{(u_{s})}_{\mathcal R,\mathcal R}
    \tilde{\bm X}^{(u_{s})}_{\mathcal R}
    +
    \bm\Lambda^{(u_{s})}_{\mathcal R,:}\bm L^{(u_{s})}
    +
    \bm e^{(u_{s})}_{\mathcal R}. 
\end{equation}
This is an LvLiNGAM representation over $\bm{X}_{\mathcal R}$, where its coefficient matrix among observed variables is exactly $\bm B^{(u_{s})}_{\mathcal R,\mathcal R}$, and its coefficient matrix from $\bm{L}$ is $\bm\Lambda^{(u_{s})}_{\mathcal R,:}$. 
Hence, the corresponding DAG is obtained from $\mathcal G^{(u_{s})}$ by deleting $X_s$ and all edges linked to $X_s$, namely the induced subgraph of $\mathcal G^{(u_{s})}$ after deleting $X_s$.
\end{proof}
Based on Lemma~\ref{lem: induced}, we have the following Corollary.
\begin{corollary}
\label{cor: invariant B}
    Let $\bm{B}_{\mathcal{R}, \mathcal{R}}$ be the coefficient matrix among $\bm{X} \setminus \{X_{s}\}$ in $\mathcal{G}^{(e_s)}$.
    Then, for each $u_{s} \in \mathrm{Pa}(X_s) \cup \{e_{s}\}$,
    \begin{align*}
        \bm B^{(u_s)}_{\mathcal R,\mathcal R}
        =
        \bm B_{\mathcal R,\mathcal R}.
    \end{align*}
    That is, the observed parent-child relationships among $\bm X_{\mathcal R}$ do not depend on $u_s$.
\end{corollary}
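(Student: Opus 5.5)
The plan is to show that the swap between $u_s$ and $e_s$ leaves the observed-to-observed coefficients among $\bm X_{\mathcal R}$ unchanged. Only the column of $X_s$ in the observed mixing block and the columns of the swapped latent variable can change. The comparison will be made at the level of mixing matrices.

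Since $X_s$ is an observed source, it has no observed parents. Hence $X_s = \sum_{L_h\in \mathrm{Pa}(X_s)} \lambda_{sh} L_h + e_s$, and the swap condition $\mathrm{Des}(u_s)=\mathrm{Des}(X_s)\cup\{X_s\}$ is what makes $\mathcal G^{(u_s)}$ well defined. By \citet{salehkaleybar2020learning}, the swap only exchanges the columns of $\bm M$ attached to $u_s$ and $e_s$. In particular, for every $j\in\mathcal R$ the column of $\bm M$ attached to $e_j$ is the same in both representations.

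The first step is to identify this column in both models. In $\mathcal G^{(e_s)}$ it is the $j$-th column of $(\bm I-\bm B)^{-1}$. Since $X_s$ is a source, its row of $(\bm I-\bm B)^{-1}$ is the unit vector. Restricted to $\mathcal R$, this column is therefore the $j$-th column of $(\bm I-\bm B_{\mathcal R,\mathcal R})^{-1}$, with a zero entry in position $s$. The same reasoning in $\mathcal G^{(u_s)}$ gives the $j$-th column of $(\bm I-\bm B^{(u_s)}_{\mathcal R,\mathcal R})^{-1}$.

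To obtain the same fact more directly, I would use Lemma~\ref{lem: induced}. Equation \eqref{eq: tilde x} shows that the residualized vector $\tilde{\bm X}^{(u_s)}_{\mathcal R}$ equals $(\bm I-\bm B^{(u_s)}_{\mathcal R,\mathcal R})^{-1}\bigl(\bm\Lambda^{(u_s)}_{\mathcal R,:}\bm L^{(u_s)}+\bm e^{(u_s)}_{\mathcal R}\bigr)$. Thus the coefficient of the disturbance $e_j$, $j\in\mathcal R$, in this representation is the $j$-th column of $(\bm I-\bm B^{(u_s)}_{\mathcal R,\mathcal R})^{-1}$. For $j\ne s$, the disturbance $e_j$ is not touched by the swap, because only $e_s$ and $u_s$ are exchanged. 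Consequently its total-effect column in $\bm M^{(u_s)}$ equals that in $\bm M^{(e_s)}$.

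The second step is to conclude $(\bm I-\bm B^{(u_s)}_{\mathcal R,\mathcal R})^{-1}=(\bm I-\bm B_{\mathcal R,\mathcal R})^{-1}$ column by column, and then invert to obtain the claim.

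The main obstacle is justifying rigorously that the columns for $e_j$, $j\neq s$, are literally unchanged by the swap, and that they are not merely equal up to a rescaling or a relabeling that might mix them with latent columns. For this I would rely on generic identifiability of $\bm M$ up to column permutation and scaling \citep{schkoda2024causal}. I would also fix the normalization by requiring unit diagonal on the observed block, which pins each $e_j$-column down uniquely as the column whose $j$-th entry is $1$ and whose entries for the ancestors of $X_j$ vanish. Genericity then rules out a latent column coinciding with such a column pattern, which excludes any mixing with latent columns.
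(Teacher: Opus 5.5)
Your argument is essentially the paper's: because $X_s$ is a source, $(\bm I-\bm B)^{-1}$ is block lower triangular with lower-right block $(\bm I-\bm B_{\mathcal R,\mathcal R})^{-1}$, and the swap replaces only the $X_s$-column, so that block is unchanged and inverting gives $\bm B^{(u_s)}_{\mathcal R,\mathcal R}=\bm B_{\mathcal R,\mathcal R}$ (the paper does the block inversion explicitly rather than column by column). You should drop your last paragraph: it is unnecessary, since the swap is defined as an exchange of two columns of $\bm M$ and so leaves the $e_j$-columns literally untouched, and its uniqueness claim is false, because a latent $L$ with $\mathrm{Des}(L)=\mathrm{Des}(X_j)\cup\{X_j\}$ has, under the paper's normalization, exactly the pattern you describe (unit $j$-th entry, zeros at the ancestors of $X_j$), which is precisely why swaps exist.
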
 
\begin{proof}
The mixing matrix for $\bm{X}$ is expressed as
\begin{align*}
    (\bm{I}-\bm B)^{-1}=
    \left[\begin{array}{cc}
         1 & \bm 0\\
        \bm{m}^{O}_{\mathcal R,s} & (I-\bm{B}_{\mathcal{R},\mathcal{R}})^{-1}
    \end{array}\right],
\end{align*}
where $\bm{m}^{O}_{\mathcal R,s}$ is the vector of total effects from $X_{s}$ to $\bm{X} \setminus \{X_{s}\}$.
For any $u_s\in\mathrm{Pa}(X_s)\cup\{e_s\}$, the corresponding swap affects only the total effects associated with $X_s$. Thus, $(\bm I-\bm B^{(u_s)})^{-1}$ is obtained from $(\bm I-\bm B)^{-1}$ by replacing the column corresponding to $X_s$ with
$(1, \alpha^{(u_s)}_{\mathcal R,s})^\top$ 
\begin{align*}
(\bm{I}-\bm{B}^{(u_s)})^{-1}
=
\begin{bmatrix}
1 & \bm{0}\\
\bm{\alpha}^{(u_s)}_{\mathcal{R},s} & (\bm{I}-\bm{B}_{\mathcal{R},\mathcal{R}})^{-1}
\end{bmatrix}, 
\end{align*}
where this replacement is trivial when $u_s=e_s$. 

Computing the inverse gives
\begin{align*}
\bm{I}-\bm{B}^{(u_s)}
=
\begin{bmatrix}
1 & \bm{0}\\
-(\bm{I}-\bm{B}_{\mathcal{ R},\mathcal{ R}})\alpha^{(u_s)}_{\mathcal{ R},s} & \bm{I}-\bm{B}_{\mathcal{R},\mathcal{R}}
\end{bmatrix}.
\end{align*}
Thus, we can obtain $\bm{B}^{(u_s)}$
\begin{align*}
    \bm{B}^{(u_s)}
    =
    \begin{bmatrix}
    0 & \bm{0}\\
    (\bm{I}-\bm{B}_{\mathcal{ R},\mathcal{ R}})\alpha^{(u_s)}_{\mathcal{ R},s} & \bm{B}_{\mathcal{R},\mathcal{R}}
\end{bmatrix},
\end{align*}
implying $\bm{B}^{(u_s)}_{\mathcal{R},\mathcal{R}} = \bm{B}_{\mathcal{R},\mathcal{R}}$ from the lower-right block.
\end{proof}
\begin{remark}
The updated model \eqref{eq: tilde x} may contain latent variables in $\mathrm{Pa}(X_s)$ having only one observed child in $\mathcal R$. Such latent variables can be absorbed into the corresponding disturbance term, as in the canonicalization described by \cite{hoyer2008estimation}. 
\end{remark}

\section{Proofs of Theorems in Section~\ref{sec: proposed method}}
\label{sec: proof of thm}
\noindent\noindent\textbf{Proof of Lemma \ref{thm: total effects and edges}}

    Following \cite{drton2011global}, the total effect from an observed source $X_{s}$ to $X_{j}$ is written as
    \begin{align*}
        m^{O}_{js} = b_{js} + \sum_{\pi \in \mathcal{P}(X_{s}, X_{j}) \setminus \{(X_{s}, X_{j})\}} ~ \prod_{l,h: (X_{l}, X_{h}) \in \pi}b_{hl}.
    \end{align*}
    Since every variable in $\widetilde{\mathrm{Ch}}(X_{s})$ is a descendant of $X_{s}$, and every node in $\bm{X}_{\mathrm{open}}$ has ancestors that are either contained in $\{X_{s}\} \cup \widetilde{\mathrm{Ch}}(X_{s})$ or not influenced by $X_{s}$, 
    every directed path from $X_s$ to $X_j$ other than the direct edge $X_s \to X_j$ must first pass through some node $X_i \in \widetilde{\mathrm{Ch}}(X_s)$.
    Thus, 
    \begin{align*}
        m^{O}_{js} = b_{js} + \sum_{i: X_{i} \in \widetilde{\mathrm{Ch}}(X_{s})} b_{is} \sum_{\pi \in \mathcal{P}(X_{i}, X_{j})} ~ \prod_{l,h: (X_{l}, X_{h}) \in \pi}b_{hl} = b_{js} + \sum_{i: X_{i} \in \widetilde{\mathrm{Ch}}(X_{s})} b_{is}~m^{O}_{ji},
    \end{align*}
    which establishes \eqref{eq: total effects and edges}.
    $\hfill\blacksquare$

\noindent\textbf{Proof of Theorem~\ref{cor: total effects and edges}}

By Lemma~\ref{thm: total effects and edges}, the true direct effect satisfies
\begin{align*}
    b_{js}
    =
    m^{O}_{js}
    -
    \sum_{i\in\mathcal I}
    b_{is}m^{O}_{ji.s}.
\end{align*}
Suppose first that $X_s\notin \mathrm{Pa}(X_j)$. Then $b_{js}=0$. Since the true values
$m^{O}_{js}$, $b_{is}$, and $m^{O}_{ji.s}$ are contained in the candidate sets
$\bm m^{O}_{js}$, $\bm b_{is}$, and $\bm m^{O}_{ji.s}$, respectively, there exists a choice
\begin{align*}
    \alpha_{js}\in\bm m^{O}_{js},\qquad
    \beta_{is}\in\bm b_{is},\qquad
    \gamma_{ji.s}\in\bm m^{O}_{ji.s},
\end{align*}
such that $\alpha_{js}$ and $\beta_{is}$, $i\in\mathcal I$, correspond to the same $k$-th order cumulant and 
\begin{align*}
    \alpha_{js}
    -
    \bm\beta_{\mathcal I,s}^{\top}
    \bm\gamma_{j,\mathcal I.s}
    =
    m^{O}_{js}
    -
    \sum_{i\in\mathcal I}
    b_{is}m^{O}_{ji.s}
    =
    0.
\end{align*}
Thus, \eqref{eq: total effects and edges generalized} holds.

Conversely, suppose that $X_s\in \mathrm{Pa}(X_j)$. Then $b_{js}\neq0$. Under the genericity assumption, no choice of candidates from
$\bm m^{O}_{js}$, $\bm b_{is}$, and $\bm m^{O}_{ji.s}$, with
$\alpha_{js}$ and $\beta_{is}$ corresponding to the same $k$-th order cumulant, can satisfy
\begin{align*}
    \alpha_{js}
    -
    \bm\beta_{\mathcal I,s}^{\top}
    \bm\gamma_{j,\mathcal I.s}
    =
    0.
\end{align*}
Hence, \eqref{eq: total effects and edges generalized} fails generically.

Therefore, \eqref{eq: total effects and edges generalized} holds if and only if
$X_s\notin \mathrm{Pa}(X_j)$.
$\hfill\blacksquare$

\noindent\textbf{Proof of Theorem \ref{thm: unchanged}}

Immediate from Lemma~\ref{lem: induced} and Corollary~\ref{cor: invariant B}.
$\hfill\blacksquare$



\noindent\textbf{Proof of Theorem \ref{thm: unchanged OL}}

Let $\kappa^{(k)}(u_s)$ denote the cumulant corresponding to
$\bm{\alpha}^{*}_{\mathcal R,s}$.
By \eqref{eq: model_us}, the corresponding latent-to-observed coefficient matrix is
$\Lambda^{(u_s)}$.
Since $\bm{\alpha}^{*}_{\mathcal R,s}$ is chosen to minimize the number of latent parents, the support of $\Lambda^{(u_s)}$ is the sparsest among all latent-to-observed structures in the observational equivalence class. Therefore, it coincides with the structure of $\mathcal G^{OL}$.

By \eqref{eq: tilde x}, the latent-to-observed coefficient matrix for the model of
$\tilde{\bm X}_{\mathcal R}$ is
$\Lambda^{(u_s)}_{\mathcal R,:}$.
Hence, its support coincides with that of the induced subgraph of
$\mathcal G^{OL}$ on $\bm V\setminus\{X_s\}$ obtained by absorbing every latent variable having only one observed child into the disturbance of that child.
$\hfill\blacksquare$

\noindent\textbf{Proof of Theorem~\ref{thm: identify mixing matrix}}

Immediate from Theorem~\ref{thm: unchanged} and Theorem~\ref{thm: unchanged OL}.
$\hfill\blacksquare$

\section{A Modification of the Original ReLVLiNGAM}
\label{sec: simple improvement}

As mentioned earlier, the origianl ReLVLiNGAM, proposed by \cite{schkoda2024causal}, cannot be applied when the local restriction is violated. In this section, we modify the original ReLVLiNGAM so that it can be applied even when the local restriction is violated.

Like the proposed method, ReLVLiNGAM is a top-down algorithm that recursively estimates total effects. Here, let $X_1$ be an observed source and let $X_i \in \mathrm{Des}(X_1)$. The update rule for $X_i$ in ReLVLiNGAM is given by
\begin{align}
\label{eq: updated Xj}
X_i
\gets
X_i
-
m_{i1}^{O}e_1
-
\sum_{h:L_h\in\mathrm{Pa}(X_1)}
m_{ih}^{OL}L_h.
\end{align}
Although this update cannot be computed directly because neither the disturbances nor the latent variables are observed, the higher-order cumulants of the updated variables satisfy
\begin{align*}
c^{(k)}_{i_{1},\ldots,i_{k}}
\gets
c^{(k)}_{i_{1},\ldots,i_{k}}
-
m^{O}_{i_{1}1}\cdots m^{O}_{i_{k}1}\kappa^{(k)}(e_1)
-
\sum_{h:L_h\in\mathrm{Pa}(X_1)}
m^{OL}_{i_{1}h}\cdots m^{OL}_{i_{k}h}\kappa^{(k)}(L_h).
\end{align*}
Hence, once the higher-order cumulants of the disturbances and latent variables have been estimated, the higher-order cumulants after the update can be computed without explicitly updating the observed variables. 

For a source node $X_1$, \cite{schkoda2024causal} proposed estimating
$$
(\kappa^{(k)}(e_1),\kappa^{(k)}(L_1),\ldots,\kappa^{(k)}(L_q))
$$ 
by solving the linear system
\[
\begin{bmatrix}
1 & 1 & \cdots & 1\\
m_{21}^O & m_{21}^{OL} & \cdots & m_{2q}^{OL}\\
\vdots & \vdots & \ddots & \vdots\\
m_{p1}^O & m_{p1}^{OL} & \cdots & m_{pq}^{OL}
\end{bmatrix}
\begin{bmatrix}
\kappa^{(k)}(e_1)\\
\kappa^{(k)}(L_1)\\
\vdots\\
\kappa^{(k)}(L_q)
\end{bmatrix}
=
\begin{bmatrix}
c^{(k)}_{1\cdots11}\\
c^{(k)}_{1\cdots12}\\
\vdots\\
c^{(k)}_{1\cdots1p}
\end{bmatrix}.
\]
However, when the local restriction is violated, this linear system becomes underdetermined and therefore cannot be solved. By contrast, Proposition~\ref{lem: estimate e cumulant} enables the estimation of the higher-order cumulants of the disturbance and the latent confounders even without the local restriction. In \cite{schkoda2024causal}, Proposition~\ref{lem: estimate e cumulant} is used solely to establish the correspondence between total effects and higher-order cumulants. Once this correspondence has been identified, the higher-order cumulants can in turn be estimated, making it possible to compute the cumulant update above. Consequently, the top-down update procedure remains applicable even when the local restriction is violated. 

The overall procedure of the modified ReLVLiNGAM is as follows.
At each iteration, it first constructs $A^{(k_1,k_2)}_{j,i}$ using sufficiently high-order cumulants and applies Proposition~\ref{thm: latent number} to estimate the number of latent confounders and the ancestral relationships among the observed variables.
Based on the estimated ancestral relationships, the observed source nodes are identified.
For each observed source, the candidate total effects from the source to the remaining observed variables are estimated by extending Proposition~\ref{thm: estimate b} to the present setting.
Next, Proposition~\ref{lem: estimate e cumulant} is used to estimate the higher-order cumulants of the disturbances and latent variables whenever they are identifiable.
These estimates are then used to update the cumulants of the residualized variables.
The same procedure is repeated recursively until all columns of the mixing matrix corresponding to the total effects have been estimated.

To avoid using unnecessarily high-order cumulants, we choose $k_1$ and $k_2$ to be the smallest values that satisfy the requirements of Proposition~\ref{thm: latent number}. 
For each descendant $X_i$ of an observed source $X_1$, let $k_{i,1}$ denote the smallest order for which the linear system in Proposition~\ref{lem: estimate e cumulant} is generically solvable.
$k_{i,1}$ can be estimated by increasing $k$ from $2$ and checking whether the corresponding linear system \eqref{eq: system} is generically solvable. Applying the same procedure to every descendant of $X_1$ yields the corresponding values $k_{i,1}$.

Now suppose that $X_i$ is the next observed source after removing $X_1$.
For another descendant $X_j$ of $X_1$, let
\begin{align}
    \label{eq: k1k2}
    k_1 = \max\{\ell+2, k_{i,1}, k_{j,1}\},\qquad
    k_2 = k_1+\left\lceil \frac{-3+\sqrt{8\ell+17}}{2} \right\rceil,
\end{align}
and construct $A^{(k_1,k_2)}_{i,j}$ as in \eqref{eq: a matrix}.
Starting from $\ell=0$, $\ell$ is estimated by increasing $\ell$ until the rank condition in Proposition~\ref{thm: latent number} is satisfied. 



Similarly, define ${\tilde{A}}^{(k_1,k_2)}_{j,i}$ by appending the row $(1,m,\ldots,m^{k_1-1})$ on top of ${A}^{(k_{1}, k_{2})}_{j, i}$ and then retaining the first $\ell+2$ columns. 
In this case, an analogous result to Proposition~\ref{thm: estimate b} holds for $(k_1,k_2)$ defined in \eqref{eq: k1k2}.
\begin{theorem}
\label{thm: estimate b revised}
    Consider the determinant of an $(\ell+2)\times (\ell+2)$ minor of ${\tilde{A}}^{(k_{1}, k_{2})}_{j, i}$ that contains the first row and treat it as a polynomial in $m$. 
    Then, the roots of this polynomial are $m^{O}_{ji}, m^{OL'}_{j1}, \cdots, m^{OL'}_{j\ell}$.
\end{theorem}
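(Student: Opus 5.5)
We follow the argument behind Proposition~\ref{thm: estimate b}. Work in the two-variable canonical model \eqref{eq: two variables} for the pair $(X_i,X_j)$, with $X_j\notin\mathrm{Anc}(X_i)$. Write $v_i=:L'_0$ and $m^{OL'}_{j0}:=m^{O}_{ji}$, with coefficient $1$ on $X_i$. Rescale each latent so that its coefficient on $X_i$ equals $1$; this is possible generically, and it is the source-side normalization under which the roots are identified. By \eqref{eq: cumulant linear}, every entry of $A^{(k_1,k_2)}_{j,i}$ has the form
\[
c^{(k)}_{j,\dots,j,\,i,\dots,i,\,i,\dots,j}=\sum_{h=0}^{\ell} (m_{jh})^{a+b}\,\kappa^{(k)}(L'_h),
\]
where $a$ is the number of leading $j$ indices (row index within block $k$) and $b$ is the number of $j$'s in the last $k_1-1$ positions (column index). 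The key step is therefore the factorization
\[
A^{(k_1,k_2)}_{j,i}=R\,\mathrm{diag}\text{-type mixing}\;V,
\qquad
V=\big[(m_{jh})^{b}\big]_{h=0,\dots,\ell;\ b=0,\dots,k_1-1}.
\]
Here the row of $A$ indexed by $(k,a)$ equals $\sum_h \kappa^{(k)}(L'_h)(m_{jh})^{a}\,V_{h,:}$. So every row lies in the row space of the $(\ell+1)\times k_1$ Vandermonde-type matrix $V$, whose rows are $(1,m_{jh},\dots,m_{jh}^{k_1-1})$.

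Next, restrict to the first $\ell+2$ columns, as prescribed in the definition of $\tilde A^{(k_1,k_2)}_{j,i}$. Call the restricted Vandermonde block $V_{\ell+2}$; it is $(\ell+1)\times(\ell+2)$. The appended first row is then $w(m)=(1,m,\dots,m^{\ell+1})$, and every row of the truncated $A$ lies in the row space of $V_{\ell+2}$. Now take any $(\ell+2)\times(\ell+2)$ minor containing $w(m)$ together with $\ell+1$ rows of $A$. Its determinant is a polynomial in $m$ of degree at most $\ell+1$.

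The roots follow directly. Setting $m=m_{jh}$ makes $w(m)$ equal to the $h$th row of $V_{\ell+2}$. All $\ell+2$ rows of the minor then lie in an $(\ell+1)$-dimensional space, so the determinant vanishes. This gives $\ell+1$ roots $m^{O}_{ji},m^{OL'}_{j1},\dots,m^{OL'}_{j\ell}$, which are generically distinct.

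It remains to show that the polynomial is not identically zero. Its leading coefficient, up to sign, is the $(\ell+1)\times(\ell+1)$ minor of the chosen rows of $A$ on columns $1,\dots,\ell+1$. The selected $\ell+1$ rows of $A$ restricted to the first $\ell+2$ columns factor as $C\,V_{\ell+2}$, where $C_{(k,a),h}=\kappa^{(k)}(L'_h)m_{jh}^{a}$. Hence the leading coefficient is $\det(C)\cdot\det(V_{\ell+1})$, with $V_{\ell+1}$ the square Vandermonde matrix. The second factor is nonzero because the $m_{jh}$ are distinct.

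The main obstacle is showing $\det C\neq 0$ generically. This is where the choice of $k_1,k_2$ in \eqref{eq: k1k2} enters: the number of rows must be at least $\ell+1$ with enough independent rows. By the count in Proposition~\ref{thm: latent number}, $A$ has rank $\min(\ell+1,d)$, and $d\ge \ell+1$ holds because $k_1\ge\ell+2$ and the $k_2$ increment matches the smallest admissible choice. I would prove that $C$ has full column rank $\ell+1$ for generic parameters by exhibiting one parameter point where it does, for instance with distinct $m_{jh}$ and a suitable choice of the $\kappa$'s, and then conclude by polynomial genericity. Raising $k_1$ beyond $\ell+2$ to $\max\{k_{i,1},k_{j,1}\}$ only adds rows and columns and changes the cumulant orders used, so the same rank argument goes through unchanged. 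This also means that any minor whose selected rows have full rank yields exactly the stated $\ell+1$ roots.
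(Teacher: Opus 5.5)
Your proposal is correct and follows the same route as the paper's proof. The determinant is a polynomial of degree at most $\ell+1$ that vanishes at the $\ell+1$ generically distinct values; you justify this explicitly through the Vandermonde row-space factorization under the normalization $m^{OL'}_{ih}=1$. Its generic non-vanishing, which the paper simply imports from Theorem 4 of Schkoda et al., you reduce to $\det C\neq 0$ via the leading coefficient.

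One small caution on that last step. The rank count for the full matrix $A^{(k_1,k_2)}_{j,i}$ only guarantees that some choice of $\ell+1$ rows works, whereas the statement concerns any minor. This is settled directly: the rows of $C$ carry distinct labels $(k,a)$, so each permutation in the Leibniz expansion of $\det C$ gives a distinct monomial in the generic cumulants $\kappa^{(k)}(L'_h)$ and effects $m_{jh}$, and no cancellation can occur.
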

\begin{proof}
Similarly to the proof of Theorem 4 in~\cite{schkoda2024causal}, we can show that the determinant of a $(\ell+2)\times(\ell+2)$ minor of 
$\widetilde{A}^{(k_1,k_2)}_{j,i}$ containing the first row is generically not the zero polynomial in $m$.
However, when $m$ in the first row is set to any value in 
$\{m^{O}_{ji}, m^{OL'}_{j1}, \ldots, m^{OL'}_{j\ell}\}$, the determinant of this minor vanishes.
Since this determinant is a nonzero polynomial of degree at most $\ell+1$ in $m$, and these $\ell+1$ values are generically distinct, they are exactly the roots of the polynomial, which completes the proof.
\end{proof}
After obtaining $\{m^{O}_{ji}, m^{OL'}_{j1}, \cdots, m^{OL'}_{j\ell}\}$, we choose the lowest order $k_{j,i}$ for which \eqref{eq: system} in Proposition~\ref{lem: estimate e cumulant} is solvable. 
Then, the $k_{j,i}$-th and higher-order cumulants of the residualized variables can be updated, and the procedure can proceed to the next iteration.

Unlike the proposed method, the modified ReLVLiNGAM still relies on updates based on estimated disturbance cumulants. 
Therefore, estimation errors in higher-order cumulants may still propagate to subsequent iterations.

\section{Illustration of the Proposed Algorithm on the DAG in Figure~\ref{fig: equivalence class of IV graph}(a)}
\label{sec: example}
The LvLiNGAM for the model in Figure~\ref{fig: equivalence class of IV graph}(a) is expressed as
\begin{align*}
    X_{1} &= L_{1} + L_{2} + e_{1},\\
    X_{2} &= \lambda_{21}L_{1} + \lambda_{22}L_{2} + L_{3} + b_{21}X_{1}+ e_{2}\\
          &= (b_{21}+\lambda_{21})L_{1} + (b_{21}+\lambda_{22})L_{2} + L_{3} + b_{21}e_{1} + e_{2},\\
    X_{3} &= \lambda_{33}L_{3} + b_{32}X_{2} + e_{3}\\
          &= b_{32}(b_{21}+\lambda_{21})L_{1} + b_{32}(b_{21}+\lambda_{22})L_{2} + (b_{32}+\lambda_{33})L_{3}
             + b_{21}b_{32}e_{1} + b_{32}e_{2} + e_{3}.
\end{align*}

By Propositions~\ref{thm: latent number}--\ref{lem: estimate e cumulant}, the proposed method identifies the observed source, the relevant total effects, and the associated disturbance cumulants. 
Here, $X_{1}$ is the unique observed source, and $\widetilde{\mathrm{Ch}}(X_{1})=\{X_{2}\}$.
Hence,
\begin{align*}
    \bm{b}_{21} &= \{b_{21},\ (b_{21}+\lambda_{21}),\ (b_{21}+\lambda_{22})\},\\
    \bm{m}^{O}_{31} &= \{b_{21}b_{32},\ b_{32}(b_{21}+\lambda_{21}),\ b_{32}(b_{21}+\lambda_{22})\},
\end{align*}
together with the correspondences between cumulants and total effects 
\begin{align}
\label{eq: pairs}
    \kappa^{(k)}(e_{1})\ &:\ \big\{b_{21},\ b_{21}b_{32}\big\}, \notag\\
    \kappa^{(k)}(L_{1})\ &:\ \big\{(b_{21}+\lambda_{21}),\ b_{32}(b_{21}+\lambda_{21})\big\}, \\
    \kappa^{(k)}(L_{2})\ &:\ \big\{(b_{21}+\lambda_{22}),\ b_{32}(b_{21}+\lambda_{22})\big\}, \notag
\end{align}
where \eqref{eq: system} is solvable at the order $k$.

Among the candidates satisfying \eqref{eq: total effects and edges generalized}, we choose
\[
\alpha_{31}=b_{32}(b_{21}+\lambda_{21}), \qquad
\beta_{21}=b_{21}+\lambda_{21},
\]
which correspond to $\kappa^{(k)}(L_1)$.
Then
\begin{align*}
    \tilde{X}_2 = X_{2}-\beta_{21}X_{1}
      &
       = \big((\lambda_{22}-\lambda_{21})L_{2}-\lambda_{21}e_{1}\big)+L_{3}+e_{2},\\
       \tilde{X}_3=
    X_{3}-\alpha_{31}X_{1}
      &
      = b_{32}\big((\lambda_{22}-\lambda_{21})L_{2}-\lambda_{21}e_{1}\big)+(b_{32}+\lambda_{33})L_{3}+b_{32}e_{2}+e_{3}\\
        &= b_{32} \tilde{X}_2 + \lambda_{33}L_{3}+e_{3}
\end{align*}
By Proposition~\ref{thm: latent number}, there is only one confounder between $\tilde{X}_2$ and $\tilde{X}_3$.
Furthermore, by Proposition~\ref{thm: estimate b}, 
we have
\[
\bm{m}^{O}_{32.1}=\{b_{32},\ (b_{32}+\lambda_{33})\}.
\]
Choosing $\gamma_{32.1}=b_{32}$ gives $\alpha_{31}-\beta_{21}\gamma_{32.1}=0$, and Theorem~\ref{cor: total effects and edges} concludes $X_{1}\notin \mathrm{Pa}(X_{3})$.

In fact, every aligned pair in \eqref{eq: pairs} can satisfy \eqref{eq: total effects and edges generalized} and the number of latent confounders between remaining variables is always one.
For instance, suppose that the method selects
\[
\alpha^*_{21}=b_{21}+\lambda_{22}, 
\qquad
\alpha^*_{31}=b_{32}(b_{21}+\lambda_{22})
\] 
to update $X_{2}$ and $X_{3}$. 
The corresponding total effect from $X_{2}$ to $X_{3}$ is also $b_{32}$, and \eqref{eq: total effects and edges generalized} still holds.
Since there are two unselected aligned groups of total effects, they correspond to two distinct latent variables and hence to two distinct total-effect columns in the mixing matrix.
Then, the mixing matrices $\widehat{\bm{M}}$ and $\widehat{\bm{M}}_{\mathrm{latent}}$ in Algorithm~\ref{alg: proposed} are updated to
\begin{align*}
    \widehat{\bm{M}} \gets 
    \left[
    \begin{array}{ccc}
       1  & 0 & 0\\
       (b_{21}+\lambda_{22})  & 1 & 0\\
       b_{32}(b_{21}+\lambda_{22}) & b_{32} & 1
    \end{array}
    \right], \quad 
    \widehat{\bm{M}}_{\mathrm{latent}} \gets 
    \left[
    \begin{array}{cc}
       1  & 1\\
       b_{21}  & (b_{21}+\lambda_{21})\\
       b_{21}b_{32} & b_{32}(b_{21}+\lambda_{21})
    \end{array}
    \right].
\end{align*}
In this case, $\tilde{X}_2$ and $\tilde{X}_3$ are
\begin{align*}
    \tilde{X}_{2} &= \big((\lambda_{21}-\lambda_{22})L_{1}-\lambda_{22}e_{1}+e_{2}\big)+L_{3},\\
    \tilde{X}_{3} &= b_{32}\big((\lambda_{21}-\lambda_{22})L_{1}-\lambda_{22}e_{1}+e_{2}\big)+(b_{32}+\lambda_{33})L_{3}+e_{3}.
\end{align*}

Applying Proposition~\ref{thm: latent number} again yields $X_{2}$ as the current observed source and $X_{3}$ as its unique descendant, hence $X_{2}\in \mathrm{Pa}(X_{3})$. Applying Proposition~\ref{lem: estimate e cumulant} to $(X_{2},X_{3})$ gives candidates $b_{32}$ and $(b_{32}+\lambda_{33})$. Since the total effect from $X_{2}$ to $X_{3}$ is already determined as $b_{32}$, the remaining value $(b_{32}+\lambda_{33})$ corresponds to a confounder between $X_{2}$ and $X_{3}$. Thus, 
\begin{align*}
    \widehat{\bm{M}}_{\mathrm{latent}} \gets
    \left[
    \begin{array}{ccc}
       1  & 1 & 0\\
       b_{21}  & (b_{21}+\lambda_{21}) & 1\\
       b_{21}b_{32} & b_{32}(b_{21}+\lambda_{21}) & (b_{32}+\lambda_{33})
    \end{array}
    \right].
\end{align*}
Consequently,
\begin{align*}
    \widehat{\bm{B}}
    &= \bm{I}-\widehat{\bm{M}}^{-1}
    = \left[
    \begin{array}{ccc}
       0  & 0 & 0\\
       b_{21}+\lambda_{22}  & 0 & 0\\
       0 & b_{32} & 0
    \end{array}
    \right], \\
    \widehat{\bm{\Lambda}}
    &= \widehat{\bm{M}}^{-1}\widehat{\bm{M}}_{\mathrm{latent}}
    = \left[
    \begin{array}{ccc}
       1  & 1 & 0\\
       -\lambda_{22}  & \lambda_{21}-\lambda_{22} & 1\\
       0 & 0 & \lambda_{33}
    \end{array}
    \right].
\end{align*}
Let the columns of $\widehat{\bm{\Lambda}}$ correspond to $(L_{1},L_{2},L_{3})$. Then the recovered parent sets are
\begin{align*}
\mathrm{Pa}(X_{1})=\{L_{1},L_{2}\},\quad
\mathrm{Pa}(X_{2})=\{L_{1},L_{2},L_{3}, X_{1}\},\quad
\mathrm{Pa}(X_{3})=\{L_{3},X_{2}\}.
\end{align*}

In finite samples, entries that are theoretically zero in $\hat{\bm B}$ and $\hat{\bm\Lambda}$ may be estimated as nonzero.
For $\hat{\bm B}$, whenever no parent-child relationship is estimated between two observed variables, the corresponding entry of $\hat{\bm B}$ is set to zero.
For $\hat{\bm\Lambda}$, we prune the matrix by setting entries whose absolute values are below predefined thresholds to zero.

\end{document}